\documentclass{article}

\PassOptionsToPackage{numbers,sort&compress}{natbib}



    \usepackage[final]{neurips_2019}


\usepackage[utf8]{inputenc} 
\usepackage[T1]{fontenc}    
\usepackage{hyperref}       
\usepackage{url}            
\usepackage{booktabs}       
\usepackage{amsfonts}       
\usepackage{nicefrac}       
\usepackage{microtype}      
\pdfstringdefDisableCommands{\def\eqref#1{(\ref{#1})}}
\title{Optimal Analysis of Subset-Selection Based $\ell_p$ Low-Rank Approximation}

%
\makeatletter
\newcommand{\printfnsymbol}[1]{%
  \textsuperscript{\@fnsymbol{#1}}%
}
\makeatother
\author{
Chen Dan \\
Carnegie Mellon University \\
\texttt{cdan@cs.cmu.edu} \\
\And 
Hong Wang\thanks{Equal Contribution} \\
Princeton University \\
\texttt{Hong.Wang1991@gmail.com} \\
\And 
Hongyang Zhang\printfnsymbol{1} \\
Toyota Technological Institute at Chicago \\
\texttt{honyanz@ttic.edu} \\
\And 
Yuchen Zhou\printfnsymbol{1} \\
University of Wisconsin, Madison \\
\texttt{yuchenzhou@stat.wisc.edu} \\
\And 
Pradeep Ravikumar \\
Carnegie Mellon University \\
\texttt{pradeepr@cs.cmu.edu} \\
}
\usepackage{amsmath, amssymb, natbib, graphicx, url}
\usepackage{appendix}
\usepackage{amsthm}
\usepackage{algorithm,algorithmic}
\usepackage[algo2e,ruled,vlined]{algorithm2e}
\usepackage{color}
\usepackage{comment}
\usepackage{dcmacro}

\newcommand{\SelectColumns}{{\sc SelectColumns}\xspace}

\newcommand{\OPT}{\mathsf{OPT}}
\newcommand{\Err}{\mathsf{Err}}
\newcommand{\Appendix}[1]{the full version for}

\renewcommand{\comment}[1]{}

\usepackage{times}

\begin{document}
\maketitle
\begin{abstract}
 We study the low rank approximation problem of any given matrix $A$ over $\R^{n\times m}$ and $\C^{n\times m}$ in entry-wise $\ell_p$ loss, that is, finding a rank-$k$ matrix $X$ such that $\|A-X\|_p$ is minimized. Unlike the traditional $\ell_2$ setting, this particular variant is NP-Hard. We show that the algorithm of column subset selection, which was an algorithmic foundation of many existing algorithms, enjoys approximation ratio $(k+1)^{1/p}$ for $1\le p\le 2$ and $(k+1)^{1-1/p}$ for $p\ge 2$. This improves upon the previous $O(k+1)$ bound for $p\ge 1$ \cite{chierichetti2017algorithms}. We complement our analysis with lower bounds; these bounds match our upper bounds up to constant $1$ when $p\geq 2$. At the core of our techniques is an application of \emph{Riesz-Thorin interpolation theorem} from harmonic analysis, which might be of independent interest to other algorithmic designs and analysis more broadly.

As a consequence of our analysis, we provide better approximation guarantees for several other algorithms with various time complexity. For example, to make the algorithm of column subset selection computationally efficient, we analyze a polynomial time bi-criteria algorithm which selects $O(k\log m)$ columns. We show that this algorithm has an approximation ratio of $O((k+1)^{1/p})$ for $1\le p\le 2$ and $O((k+1)^{1-1/p})$ for $p\ge 2$. This improves over the best-known bound with an $O(k+1)$ approximation ratio. Our bi-criteria algorithm also implies an exact-rank method in polynomial time with a slightly larger approximation ratio. 
\end{abstract}

\section{Introduction}

Low rank approximation has wide applications in compressed sensing, numerical linear algebra, machine learning, and many other domains. In compressed sensing, low rank approximation serves as an indispensable building block for data compression. In numerical linear algebra and machine learning, low rank approximation is the foundation of many data processing algorithms, such as PCA.
Given a data matrix $A\in \F^{n\times m}$, low rank approximation aims at finding a low-rank matrix $X\in \F^{n\times m}$ such that
\begin{equation}
\label{equ: low-rank approximation}
\OPT=\min_{X:\rank(X) \leq k} \|X-A\|.
\end{equation}
Here the field $\F$ can be either $\R$ or $ \C$. The focus of this work is on the case when $\|\cdot\|$ is the entry-wise $\ell_p$ norm, and we are interested in an estimate $\widehat X$ with a \emph{tight} approximation ratio $\alpha$ so that we have the guarantee:
$
\|\widehat X-A\|\le \alpha\cdot \OPT.
$


As noted earlier, such low-rank approximation is a fundamental workhorse of machine learning. The key reason to focus on approximations with respect to general $\ell_p$ norms, in contrast to the typical $\ell_2$ norm, is that these general $\ell_p$ norms are better able to capture a broader range of realistic noise in complex datasets. For example, it is well-known that the $\ell_1$ norm is more robust to the sparse outlier~ \citep{candes2011robust,huber2011robust,xu1995robust}. So the $\ell_1$ low-rank approximation problem is a robust version of the classic PCA which uses the $\ell_2$ norm and has received tremendous attentions in machine learning, computer vision and data mining \citep{meng2013robust}, \citep{wang2013bayesian}, \citep{xiong2011direct}. A related problem $\ell_p$ linear regression has also been studied extensively in the statistics community, and these two problems share similar motivation. In particular, if we assume a statistical model $A_{ij} = A_{ij}^{\star} + \varepsilon_{ij}$, where $A^{\star}$ is a low rank matrix and $ \varepsilon_{ij} $ are i.i.d. noise, the different values of $p$ correspond to the MLE of different noise distributions, say $p=1$ for Laplacian noise and $p=2$ for Gaussian noise. 

 While it has better empirical and statistical properties, the key bottleneck to solving the  problem in \eqref{equ: low-rank approximation} is computational, and is known to be NP-hard in general. For example, the $\ell_1$ low-rank approximation is NP-hard to solve exactly even when $k=1$~\citep{gillis2018complexity}, and is even hard to approximate with large error under the Exponential Time Hypothesis~\citep{song2017low}. \citep{gillis2017low} proved the NP-hardness of the problem when $p = \infty$. A recent work \citep{ban2019ptas} proves that the problem has no constant factor approximation
algorithm running in time $O(2^{k^\delta})$ for a constant $\delta>0$, assuming the correctness of Small Set Expansion Hypothesis
and Exponential Time Hypothesis. The authors also proposed a PTAS (Polynomial Time Approximation Scheme) with $(1+\eps)$ approximation ratio when $0<p<2$. However, the running time is as large as $O(n^{\poly(k/\eps)})$. 

Many other efforts have been devoted to designing approximation algorithms in order to alleviate the computational issues of $\ell_p$ low-rank approximation. One promising approach is to apply subgradient descent based methods or alternating minimization \citep{kyrillidis2018simple}. Unfortunately, the loss surface of problem \eqref{equ: low-rank approximation} suffers from saddle points even in the simplest $p=2$ case \citep{baldi1989neural}, which might be arbitrarily worse than $\OPT$. Therefore, they may not work well for the low-rank approximation problem as these local searching algorithms may easily get stuck at bad stationary points without any guarantee. 

Instead, we consider another line of research---the heuristic algorithm of column subset selection (CSS). Here, the algorithm proceeds by choosing the best $k$ columns of $A$ as an estimation of column space of $X$ and then solving an $\ell_p$ linear regression problem in order to obtain the optimal row space of $X$. See Algorithm \ref{algorithm: subset selection} for the detailed procedure. Although the vanilla form of the subset selection based algorithm also has an exponential time complexity in terms of the rank $k$, it can be slightly modified to polynomial time bi-criteria algorithms which selects more than $k$ columns \citep{chierichetti2017algorithms}. Most importantly, these algorithms are easy to implement and runs fast with nice empirical performance. Thus, subset selection based algorithms might seem to effectively alleviate the computational issues of problem \eqref{equ: low-rank approximation}. The caveat however is that CSS might seem like a simple heuristic, with potentially a very large worst-case approximation ratio $\alpha$. 

In this paper, we show that CSS yields surprisingly reasonable approximation ratios, which we also show to be tight by providing corresponding lower bounds, thus providing a strong theoretical backing for the empirical observations underlying CSS.

Due in part to its importance, there has been a burgeoning set of recent analyses of column subset selection. In the traditional low rank approximation problem with Frobenious norm error (the $p=2$ case in our setting), \cite{deshpande2006matrix} showed that CSS achieves $\sqrt{k+1}$ approximation ratio. The authors also showed that the $\sqrt{k+1}$ bound is tight (both upper and lower bounds can be recovered by our analysis). \cite{frieze2004fast,deshpande2006adaptive,boutsidis2009improved,deshpande2010efficient} improved the running time of CSS with different sampling schemes while preserving similar approximation bounds. The CSS algorithm and its variants are also applied and analyzed under various different settings. For instance,  \cite{drineas2008relative} and \cite{boutsidis2017optimal} studied the CUR decomposition with the Frobenius norm. \cite{wang2015column} studied the CSS problem under the missing-data case. With $\ell_1$ error, \cite{bhaskara2018non} studied CSS for non-negative matrices in $\ell_1$ error. \cite{dan_et_al:LIPIcs:2018:9623} gave tight approximation bounds for CSS under finite-field binary matrix setting.  Furthermore, \cite{song2017low} considered the low rank tensor approximation with the Frobenius norm.

Despite a large amount of work on the subset-selection algorithm and the $\ell_p$ low rank approximation problem, many fundamental questions remain unresolved. Probably one of the most important open questions is: what is the \emph{tight} approximation ratio $\alpha$ for the subset-selection algorithm in the $\ell_p$ low rank approximation problem, up to a constant factor? In \citep{chierichetti2017algorithms}, the approximation ratio is shown to be upper bounded by $(k+1)$ and lower bounded by $O(k^{1-\frac{2}{p}})$ when $p>2$. This problem becomes even more challenging when one requires the approximation ratio to be tight up to factor $1$, as little was known about a direct tool to achieve this goal in general. In this work, we improve both upper and lower bounds in \citep{chierichetti2017algorithms}  to optimal when $p>2$. Note that our bounds are still applicable and improve over \citep{chierichetti2017algorithms} when $1<p<2$, but there is an $O(k^{\frac{2}{p}-1})$ gap between the upper and lower bounds. 

\comment{
The natural generalization of the Frobenious norm is the entry-wise $\ell_p$ norm, defined as 
\[\|X\|_p = \left( \sum_{i=1}^m \sum_{j=1}^n |X_{ij}|^p \right) ^{\frac{1}{p}}, 1\leq p < \infty\]
\[\|X\|_{\infty} =  \max_{1\leq i \leq m, 1\leq j \leq n} |X_{ij}| \]
As a special case, the Frobenious norm is exactly entry-wise $\ell_2$ norm. However, the computational complexity of $\ell_p$ low rank approximation is different when $p\neq2$.  For example, $\ell_1$ low rank approximation is NP-Hard, even when $k=1$\cite{gillis2018complexity}, and hard to approximate within $1+ \log^{-1-\eps}(nd)$ factor assuming Exponential Time Hypothesis \citep{song2017low}.
}
\subsection{Our Results}
The best-known approximation ratio of subset selection based algorithms for $\ell_p$ low-rank approximation is $O(k+1)$~\citep{chierichetti2017algorithms}.
In this work, we give an improved analysis of this algorithm. In particular, we show that the Column Subset Selection in Algorithm \ref{algorithm: subset selection} is a $c_{p,k}$-approximation, where 
\[
c_{p,k} = \begin{cases}
(k+1)^{\frac{1}{p}}, \quad 1\leq p \leq 2, \\
(k+1)^{1-\frac{1}{p}}, \quad p \geq 2.
\end{cases}
\]
This improves over Theorem 4 in \citep{chierichetti2017algorithms} which proved that the algorithm is an $O(k+1)$-approximation, for all $p \geq 1$.
Below, we state our main theorem formally:

\begin{theorem}[Upper bound]
\label{theorem: main result}
The subset selection algorithm in Algorithm \ref{algorithm: subset selection} is a $c_{p,k}$-approximation.
\end{theorem}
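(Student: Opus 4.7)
The plan is to reduce the CSS approximation error to the operator norm (in $\ell_p$) of a structured nonnegative matrix, and then invoke the Riesz--Thorin interpolation theorem to bound that operator norm. Write $A = U^\star V^\star + E^\star$ for an optimal rank-$k$ $\ell_p$ decomposition with $\OPT = \|E^\star\|_p$. I select $S \subset [m]$ of size $k$ to be a maximum-volume subset of the columns of $V^\star$, so that the Cramer-rule coefficients $\hat X := (V^\star_S)^{-1} V^\star$ satisfy $|\hat X_{ij}| \le 1$. The CSS candidate $\hat A := A_S \hat X$ matches $A$ on the columns in $S$ and cancels the $U^\star V^\star$ part on $\bar S$, so the residual simplifies to $A - \hat A = E^\star F$, where $F = I_m - \hat X''$ is idempotent and $\hat X''$ is the $m \times m$ zero-padded extension of $\hat X$ (identity on rows indexed by $S$, zero on $\bar S$). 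Applying Minkowski column-by-column, for $j \in \bar S$ we get $\|(E^\star F)_{:,j}\|_p \le \|E^\star_j\|_p + \sum_{i \in S} |\hat X_{ij}| \|E^\star_i\|_p$, while for $j \in S$ the column vanishes, yielding $\|E^\star F\|_p \le \|M a\|_p \le \|M\|_{\ell_p \to \ell_p} \|E^\star\|_p$, where $a = (\|E^\star_j\|_p)_{j=1}^m$ and $M$ is the $(m-k) \times m$ nonnegative majorant matrix with $M_{jj}=1$ for $j \in \bar S$ and $M_{ji} = |\hat X_{ij}|$ for $i \in S$, $j \in \bar S$. The task thus reduces to showing $\|M\|_{\ell_p \to \ell_p} \le c_{p,k}$.

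I would then establish operator-norm bounds on $M$ at three endpoint exponents and interpolate. The $\ell_\infty$ bound $\|M\|_{\infty \to \infty} \le k+1$ is immediate from row sums, since each row of $M$ has total mass $1 + \sum_{i \in S} |\hat X_{ij}| \le k+1$. The $\ell_2$ bound $\|M\|_{2 \to 2} \le \sqrt{k+1}$ would follow from the identity $M M^\top = I_{m-k} + |\hat X_{:, \bar S}|^\top |\hat X_{:, \bar S}|$ together with a Cauchy--Binet/volume-sampling argument (essentially the Deshpande--Vempala $\sqrt{k+1}$-approximation for $\ell_2$ CSS). The $\ell_1$ bound $\|M\|_{1 \to 1} \le k+1$ would package the Chierichetti-style $(k+1)$-approximation by controlling the column sums $\sum_{j \in \bar S} |\hat X_{ij}|$. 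With these in hand, Riesz--Thorin gives: for $p \in [1,2]$, interpolating between $(p_0, p_1) = (1,2)$ with $\theta = 2 - 2/p$, the estimate $\|M\|_{p \to p} \le (k+1)^{1-\theta} (k+1)^{\theta/2} = (k+1)^{1/p}$; for $p \in [2, \infty]$, interpolating between $(p_0, p_1) = (2, \infty)$ with $\theta = 1 - 2/p$, the estimate $\|M\|_{p \to p} \le (k+1)^{(1-\theta)/2} (k+1)^{\theta} = (k+1)^{1-1/p}$. Combining with the reduction yields $\|A - \hat A\|_p \le c_{p,k}\,\OPT$, as claimed.

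The main technical obstacle I anticipate is the $\ell_1 \to \ell_1$ endpoint. The pointwise bound $|\hat X_{ij}| \le 1$ from max-volume alone only yields column sums $\sum_{j \in \bar S} |\hat X_{ij}| \le m-k$, which is far too large to fit the target $k+1$. A clean remedy is either to refine the subset-selection rule (for instance, impose an $\ell_1$-well-conditioning / Auerbach-basis property on $\{V^\star_i : i \in S\}$, which preserves $|\hat X_{ij}| \le 1$ while also bounding column sums) or to avoid the worst-case operator-norm route by directly exploiting the structure of the OPT residual $E^\star$. Similar care is needed to upgrade the $\ell_2$ bound into a genuinely deterministic operator-norm statement rather than a volume-sampling expectation, which is essential for the single-operator Riesz--Thorin application to go through.
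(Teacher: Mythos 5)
Your reduction to the operator norm $\|M\|_{\ell_p\to\ell_p}$ does not go through: the endpoint claim $\|M\|_{2\to 2}\le\sqrt{k+1}$ is false, even for a maximum-volume subset. Take $k=2$, $m=4$, $V^\star=(v_1,v_2,v_1+v_2,v_1+v_2)$ with $v_1=(1,0)^T$, $v_2=(0,1)^T$. Then $S=\{1,2\}$ is a maximum-volume $2$-subset, $\hat X=(V^\star_S)^{-1}V^\star$ has $\hat X_{:,3}=\hat X_{:,4}=(1,1)^T$ (so $|\hat X_{ij}|\le 1$ as required), and your majorant matrix is
\[
M=\begin{pmatrix}1&1&1&0\\1&1&0&1\end{pmatrix},\qquad MM^T=\begin{pmatrix}3&2\\2&3\end{pmatrix},
\]
with top eigenvalue $5$, so $\|M\|_{2\to 2}=\sqrt 5>\sqrt 3=\sqrt{k+1}$. (Choosing a different max-volume subset, e.g.\ $S=\{1,3\}$, still gives $\|M\|_{2\to 2}>\sqrt 3$.) The underlying point is that the $\sqrt{k+1}$ guarantee for $\ell_2$ CSS is \emph{not} a worst-case operator-norm statement about $M$: it exploits the structure of the optimal residual $E^\star$ and/or averages over subsets. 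Once you majorize $E^\star F$ by $Ma$ and pass to $\|M\|_{p\to p}\|a\|_p$, that structure is gone, and the resulting quantity is genuinely larger than $c_{p,k}$. The $\ell_1\to\ell_1$ endpoint you flag yourself is a second independent gap: max-volume gives only an entrywise bound $|\hat X_{ij}|\le 1$, which controls row sums of $M$ but not column sums, and no deterministic subset rule is known that gives the column-sum control you would need. With two of the three endpoints either false or absent, the Riesz--Thorin interpolation has nothing to interpolate from. Your reduction $A-\hat A=E^\star(I-\hat X'')$ and the column-by-column Minkowski step are both correct, but only deliver $\|M\|_{\infty\to\infty}\le k+1$, i.e.\ the existing $O(k+1)$ bound.

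The paper sidesteps both obstacles precisely by \emph{not} fixing a single subset and \emph{not} applying Riesz--Thorin to a data-dependent matrix. It upper-bounds $\Err^p(A_{J^\star})$ by a weighted average over all $k$-subsets $J$ with weights $w_J\propto|\det V_J|^p$, rewrites each term via Schur's determinant identity into a sum of $(k+1)\times(k+1)$ minors, and reduces the theorem to a purely combinatorial determinant inequality (Lemma~\ref{Lemma_technical}). That inequality is equivalent to a norm bound on an abstract bilinear map $\Lambda$ implementing Laplace expansion of determinants, whose definition is independent of $A$, $V^\star$, and the chosen subset. Riesz--Thorin is then applied to this universal $\Lambda$, and all three endpoint norms $p=1,2,\infty$ can be verified unconditionally: $p=1,\infty$ by elementary counting and $p=2$ by an exact orthogonality/cancellation computation. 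The weighted-average step is exactly what lets the $\ell_2$ orthogonality and the $\ell_1$ column accounting come out cleanly — this is the structure that a fixed-$S$ operator-norm reduction irrecoverably discards.
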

Our proof of Theorem \ref{theorem: main result} is built upon novel techniques of Riesz-Thorin interpolation theorem. In particular, with the proof of special cases for $p=1,2,\infty$, we are able to interpolate the approximation ratio of all intermediate $p$'s. Our techniques might be of independent interest to other $\ell_p$ norm or Schatten-$p$ norm related problem more broadly. See Section \ref{section: our techniques} for more discussions.

We also complement our positive result of subset selection algorithm with a negative result. Surprisingly, our upper bound matches our lower bound exactly up to constant $1$ for $p\ge 2$. Below, we state our negative results formally:
\begin{theorem}[Lower bound]\label{theorem:lower bound}
There exist infinitely many different values of $k$, such that approximation ratio of any $k$-subset-selection based algorithm is at least $(k+1)^{1-\frac{1}{p}}$ for $\ell_p$ rank-$k$ approximation. 
\end{theorem}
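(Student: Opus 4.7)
\noindent\emph{Proof plan.} The plan is to construct, for every positive integer $k$ (so for infinitely many $k$), an explicit hard instance $A$ depending on an auxiliary parameter $s$, and to show that as $s\to\infty$ the approximation ratio $\|A-A_S\|_p/\OPT_p$ converges to $(k+1)^{1-1/p}$ uniformly over every $k$-subset $S$ of the columns. Concretely, fix $\alpha>0$ and let $A\in\mathbb{R}^{(k+1+ks)\times(k+1)}$ be obtained by stacking the identity matrix $I_{k+1}$ on top of $k$ ``penalty blocks'' of size $s\times(k+1)$, the $j$-th block consisting of $s$ identical copies of the row $\alpha(e_j+e_{k+1})^\top$. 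The identity block keeps each column of $A$ close to a standard basis vector, while the penalty blocks impose linear constraints on any $\ell_p$-regression that represents one column as a combination of the others.

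Step~1 upper bounds $\OPT_p$. A direct computation shows that $A^\top A$ has a simple singular direction with singular value $1$, whose right singular vector is $v=(\mathbf{1}_k,-1)^\top/\sqrt{k+1}$. Since $(e_j+e_{k+1})^\top v=0$ for each $j$, the corresponding left singular vector $u=Av$ is supported on the identity block with entries $\pm1/\sqrt{k+1}$ there. The rank-$k$ matrix $A-uv^\top$ therefore has residual $uv^\top$ with $(k+1)^2$ nonzero entries, each of magnitude $1/(k+1)$, so $\OPT_p\le\|uv^\top\|_p=\|u\|_p\,\|v\|_p=(k+1)^{2/p-1}$.

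Step~2 lower bounds the CSS error. Given any $k$-subset $S\subset[k+1]$, let $j^\star\in[k+1]\setminus S$ be the missing column. As $s\to\infty$ the penalty-block contribution to $\|A-A_S\|_p^p$ dominates, so the $\ell_p$-optimal regression coefficients are forced in the limit to the unique solution of the $k$ linear equations dictated by the penalty rows. A short case analysis (separating $j^\star=k+1$ from $j^\star\in[k]$) shows that in both cases the limiting coefficients produce exactly $k+1$ residual entries of magnitude $1$ in the identity block and zero residual everywhere else, giving $\|A-A_S\|_p\to(k+1)^{1/p}$. Combined with Step~1, this yields an approximation ratio of at least $(k+1)^{1/p}/(k+1)^{2/p-1}=(k+1)^{1-1/p}$ regardless of $S$, which is the claimed lower bound.

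The main obstacle is Step~2: while the upper bound is obtained from a single judicious rank-$1$ correction, the lower bound requires tracking the limit of the $\ell_p$-regression minimizer for every possible column subset $S$, and verifying that the penalty-dominated system (although inconsistent) determines the coefficients uniquely in the limit. The symmetry of the construction---every penalty row pairs some $e_j$ with $e_{k+1}$---reduces the sub-cases $j^\star=k+1$ and $j^\star\in[k]$ to essentially the same bookkeeping, after which the $(k+1)$ unit-magnitude residual entries emerge uniformly in $S$.
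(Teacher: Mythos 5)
Your construction and argument are correct, but they take a genuinely different route from the paper. The paper constructs the hard instance as a perturbed Sylvester-type Hadamard matrix of order $k+1$: the first row is replaced by $\varepsilon$'s, $\OPT_p$ is bounded by $(k+1)^{1/p}\varepsilon$, and the CSS error is lower bounded by a direct application of H\"older's inequality (exploiting orthogonality of the Hadamard columns), giving the ratio in closed form as $(k+1)^{1-1/p}/(1+k\varepsilon^q)^{1/q}$ with $\varepsilon\to 0$. Because Sylvester Hadamard matrices only exist at orders $2^r$, the paper's construction only handles $k=2^r-1$ --- which still gives ``infinitely many $k$'' as the theorem requires. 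Your stacked identity-plus-penalty-block construction, with $s\to\infty$ playing the role of $\varepsilon\to 0$, avoids Hadamard existence entirely and therefore works for \emph{every} positive integer $k$, which is strictly more general. The price is that your Step~2 cannot produce a clean per-$s$ closed form; instead it requires a limit argument showing that the $\ell_p$ regression minimizer is driven to the unique solution of the (consistent, full-rank) penalty system as $s\to\infty$, and that the residual there has exactly $k+1$ unit-magnitude entries. I verified this case analysis: for $j^\star=k+1$ the forced coefficients are $x_j=1$ giving residual $(-1,\dots,-1,1)^\top$, and for $j^\star\in[k]$ the forced coefficients are $x_{k+1}=1$, $x_j=-1$ for the remaining $j$, giving residual $(1,\dots,1,-1)^\top$ (after relabeling); both have $k+1$ entries of magnitude $1$ in the identity block and zero penalty residual, so $\lim_{s\to\infty}\Err=(k+1)^{1/p}$ while $\OPT_p\le\|uv^\top\|_p=(k+1)^{2/p-1}$ for all $s$, recovering the ratio $(k+1)^{1-1/p}$. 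In short: the paper buys a one-line H\"older computation at the cost of a restricted family of $k$; you buy full generality in $k$ at the cost of a (routine but nontrivial) penalty-limit argument that you flag but do not fully write out.
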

Note that our lower bound strictly improves the $(k+1)^{1- \frac{2}{p}}$ bound in \citep{chierichetti2017algorithms}. The main idea of the proof can be found in Section \ref{section: our techniques} and we put the whole proof of Theorem \ref{theorem:lower bound} in Appendix \ref{section:lower_bounds}.

One drawback of Algorithm \ref{algorithm: subset selection} is that the running time scales exponentially with the rank $k$. However, it serves as an algorithmic foundation of many existing computationally efficient algorithms. For example, a bi-criteria variant of this algorithm runs in polynomial time, only requiring the rank parameter to be a little over-parameterized. Our new analysis can be applied to this algorithm as well. Below, we state our result informally:
\begin{theorem}[Informal statement of Theorem \ref{theorem: algorithm 2}]
 There is a bi-criteria algorithm which runs in $\poly(m,n,k)$ time and selects $O(k\log m)$ columns of $A$. The algorithm is an $O(c_{p,k})$-approximation algorithm.
\end{theorem}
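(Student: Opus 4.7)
The plan is to adapt the bi-criteria framework of \citep{chierichetti2017algorithms}, which proceeds by iteratively selecting $k$ columns per round for $T = O(\log m)$ rounds, and to plug in the improved per-subset approximation bound $c_{p,k}$ from Theorem \ref{theorem: main result} in place of their $(k+1)$ bound. Concretely, the algorithm maintains a growing set $S$ of selected columns. At round $i$, let $A^{(i)}$ denote the submatrix of columns of $A$ that are not yet well-approximated by the $\ell_p$ projection onto $\mathrm{span}(S_{i-1})$. Theorem \ref{theorem: main result}, applied to $A^{(i)}$, guarantees that some $k$-subset of its columns achieves $\ell_p$ regression error at most $c_{p,k}$ times the best rank-$k$ approximation of $A^{(i)}$, which in turn is bounded by $\OPT$.

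The second step is to find such a good $k$-subset in polynomial time. Here I would reuse the random-sampling subroutine of \citep{chierichetti2017algorithms}, drawing columns of $A^{(i)}$ proportional to their current $\ell_p$ residual norms and then running $\ell_p$ regression onto the sampled span. A Markov-type argument shows that with at least constant probability, the sampled subset simultaneously covers a constant fraction of the currently uncovered columns; repeating $O(\log m)$ attempts per round and solving the inner convex $\ell_p$ regression in polynomial time then delivers a successful round with high probability.

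The overall approximation bound follows from a covering/peeling argument across rounds. At round $i$, by Markov's inequality applied to the per-column residual $\ell_p$ cost, at least half of the columns of $A^{(i)}$ are absorbed by the $k$ newly selected columns with total cost at most $O(c_{p,k})$ times the contribution of the optimal rank-$k$ solution on those columns. Iterating $T = O(\log m)$ times ensures every column is covered. Summing the per-round contributions and using $\OPT^{(i)} \leq \OPT$ across all rounds yields a final $\ell_p$ error of $O(c_{p,k}) \cdot \OPT$, while the output column set has size $k T = O(k \log m)$.

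The main obstacle will be ensuring that the per-round contributions sum to $O(c_{p,k}) \cdot \OPT$ rather than picking up an extra $\log m$ factor. This requires arguing that the $\ell_p^{p}$-mass of the columns peeled off at each round decays geometrically, so that the contributions telescope rather than accumulate linearly; this relies on choosing the covering threshold so that only columns with \emph{above-average} residual cost are counted as uncovered, together with monotonicity of $\OPT^{(i)}$ as a function of the remaining submatrix. Provided this telescoping holds, the final approximation factor remains $O(c_{p,k})$ despite the $O(\log m)$ rounds.
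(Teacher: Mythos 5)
Your overall structure is right --- recursion on the not-yet-covered columns for $O(\log m)$ rounds, with Theorem~\ref{theorem: main result} providing the existence of a good $k$-subset in each round --- but two points diverge from the paper's argument, and the second is a real gap.

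First, the subroutine you attribute to \citep{chierichetti2017algorithms} is not residual-weighted sampling. Algorithm~\ref{alg:randomcolumns} draws $2k$ columns \emph{uniformly at random} and repeats until a constant fraction of columns becomes approximately covered. Lemma~\ref{lem:cover} shows that uniform sampling already succeeds with constant probability, via a Markov argument combined with the existence guarantee from Theorem~\ref{theorem: main result}; no adaptive reweighting by residual $\ell_p$ norms is used or needed, and introducing it would require a separate analysis you have not supplied.

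Second, and more substantially, you correctly flag the $\log m$ accumulation as ``the main obstacle,'' but you propose to resolve it by a geometric-decay/telescoping argument and then leave that argument unproven (``provided this telescoping holds''). The paper's mechanism is different and sidesteps the issue entirely: the notion of ``$\lambda_p$-approximately covered'' is defined against a \emph{fixed global threshold} proportional to $c_{p,k}^p\,\|\Delta\|_p^p$ divided by the number of columns, anchored to the overall $\OPT$ rather than to the current subproblem's optimum. Once a column is covered at some round, its regression error onto the \emph{final} span (a superset of the columns present when it was covered) can only be smaller, and is at most that per-column threshold. The total error is then a single sum over the $m$ columns, each contributing at most the threshold, giving $O(c_{p,k})\,\OPT$ with no dependence on the number of rounds. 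By contrast, your per-round accounting with $\OPT^{(i)} \le \OPT$ as written yields $O(c_{p,k}\log m)\,\OPT$, and the geometric decay of the $\ell_p^p$ mass of peeled-off columns that you invoke to rescue it is neither established nor, under the paper's fixed-threshold accounting, needed.
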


Our next result is a computationally-efficient, \emph{exact-rank} algorithm with slightly larger approximation ratio. Below, we state our result informally:

\begin{theorem}[Informal statement of Theorem \ref{theorem: algorithm 3}]
 There is an algorithm which solves problem \eqref{equ: low-rank approximation} and runs in $\poly(m,n)$ time with an $O(c_{p,k}^3 k \log m)$-approximation ratio, provided that $k = O(\frac{\log n}{\log \log n})$.
\end{theorem}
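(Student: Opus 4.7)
The plan is to bootstrap the bi-criteria algorithm of Theorem~\ref{theorem: algorithm 2} into an exact-rank-$k$ algorithm by brute-force enumeration that remains polynomial time under the hypothesis $k=O(\log n/\log\log n)$. Concretely, I would first run the bi-criteria algorithm to obtain a set $S\subseteq[m]$ of $s=O(k\log m)$ columns of $A$ together with a rank-$s$ matrix $\hat X$ whose column span lies in $\mathrm{span}(A_S)$ and satisfies $\|\hat X-A\|_p\le O(c_{p,k})\cdot\OPT$. Then I would enumerate every $k$-subset $T\subseteq S$, solve the convex $\ell_p$-regression $\min_V\|A-A_T V\|_p$ for each (polynomial time for $p\ge 1$), and return the $T$ achieving the smallest residual. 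The enumeration has size $\binom{s}{k}\le s^k=(O(k\log m))^k$, which under $k=O(\log n/\log\log n)$ and $m=\poly(n)$ evaluates to $n^{O(1)}$, keeping the overall procedure in $\poly(m,n)$ time.

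For the approximation analysis, I would chain the bi-criteria guarantee with two further applications of Theorem~\ref{theorem: main result} via the triangle inequality. Let $X^*$ denote the optimal rank-$k$ approximation of $A$. Then $\|X^*-\hat X\|_p\le\|X^*-A\|_p+\|A-\hat X\|_p=O(c_{p,k})\cdot\OPT$, so the best rank-$k$ approximation of $\hat X$ has cost at most $O(c_{p,k})\cdot\OPT$. Applying Theorem~\ref{theorem: main result} to $\hat X$ yields a rank-$k$ matrix $X'$ in $\mathrm{span}(A_S)$ with $\|\hat X-X'\|_p\le c_{p,k}\cdot O(c_{p,k})\cdot\OPT=O(c_{p,k}^2)\cdot\OPT$. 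Finally, since $X'$ is rank-$k$ but lives in the $s$-dimensional space $\mathrm{span}(A_S)$, a third application of Theorem~\ref{theorem: main result}, now with candidate columns restricted to the $s$ columns of $A_S$, produces $T\subseteq S$, $|T|=k$, and $V$ with $\|X'-A_T V\|_p=O(c_{p,k}\cdot s)\cdot\OPT=O(c_{p,k}\cdot k\log m)\cdot\OPT$; the extra factor of $s$ reflects the unavoidable loss in forcing a generic $k$-dimensional subspace of $\mathrm{span}(A_S)$ to coincide with the span of $k$ of the $s$ generators. Combining through the triangle inequality gives $\|A-A_T V\|_p=O(c_{p,k}^3\cdot k\log m)\cdot\OPT$, which is the claimed approximation ratio.

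The main obstacle is this third step: showing that any rank-$k$ matrix $X'\in\mathrm{span}(A_S)$ can be approximated by a matrix of the form $A_T V$ with $T\subseteq S$ and $|T|=k$ at $\ell_p$-cost at most $O(c_{p,k}\cdot k\log m)\cdot\OPT$. I would attack this by iterating the CSS argument of Theorem~\ref{theorem: main result} treating $A_S$ as the ambient matrix, combined with a careful change-of-basis / covering argument that quantifies how a generic $k$-dimensional subspace of $\mathrm{span}(A_S)$ is approximated by the span of $k$ of its $s$ generators; the factor of $s=O(k\log m)$ loss appears to be unavoidable in this reduction. All other steps---the bi-criteria subroutine, convex $\ell_p$-regression, counting the enumeration size, and triangle-inequality assembly---are either standard or direct invocations of previously-established theorems.
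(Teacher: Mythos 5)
Your proposal differs from the paper in a structural way, and the gap you flag is a real one that the paper resolves with an ingredient your plan lacks. The paper's Algorithm~\ref{alg:red} does \emph{not} enumerate $k$-subsets of the bi-criteria column set $S$. Instead, it first applies the almost-$\ell_p$-isoperimetric transformation of Lemma~\ref{lem:iso} to $U=A_S$, producing $W^0$ with $\mathrm{span}(W^0)=\mathrm{span}(A_S)$ and, for every vector $y$, the two-sided bound $\|y\|_p/O(k\log m)\le\|W^0y\|_p\le\|y\|_p$. It then solves $\ell_p$-regression for $Z^0$ with $\|W^0Z^0-UV\|_p$ minimized, and applies Algorithm~\ref{algorithm: subset selection} to the matrix $(Z^0)^T$, which has only $O(k\log m)$ columns. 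The final factor is $W=W^0Y^T$, a \emph{linear combination} of the columns of $W^0$ --- not a selection of $k$ columns of $A_S$. So the paper's output lies in a strictly larger class than the $A_TV$, $T\subseteq S$, $|T|=k$ solutions your enumeration produces.

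The step you yourself identify as "the main obstacle" --- showing that a rank-$k$ matrix $X'\in\mathrm{span}(A_S)$ is approximable by $A_TV$ with $T\subseteq S$, $|T|=k$, at cost $O(c_{p,k}\cdot s)\cdot\OPT$ --- is not established, and the phrase "careful change-of-basis / covering argument" does not constitute a proof. Without any control on the conditioning of $A_S$, the loss from forcing the column space to be the span of $k$ of the $s$ columns of $A_S$ can be far worse than $O(s)$; there is no reason the bi-criteria subroutine returns a well-conditioned $A_S$. The paper's isoperimetric lemma is precisely what removes this obstruction: after replacing $A_S$ by the isoperimetric $W^0$, the factor $O(k\log m)$ falls out directly from the lower isoperimetric bound $\|W^0y\|_p\ge\|y\|_p/O(k\log m)$, converting a bound on $\|W^0(Z^0-FD)\|_p$ into a bound on $\|Z^0-FD\|_p$, after which CSS is applied to $(Z^0)^T$ (not to $A_S$). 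Your Step~3b has a related subtle mismatch: applying Theorem~\ref{theorem: main result} to $\hat X$ returns $k$ columns \emph{of $\hat X$}, which are linear combinations of the columns of $A_S$, not members of $A_S$; so even the intermediate object $X'$ is not of the form $A_{T'}V'$ for $T'\subseteq S$, and the reduction to your enumeration over $k$-subsets of $S$ does not close. To repair the argument you would need to import the isoperimetric transformation and carry out CSS in the coefficient space, which is exactly what the paper does.
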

\begin{algorithm}[t]
\caption{A $c_{p,k}$ approximation to problem \eqref{equ: low-rank approximation} by column subset selection.}
\label{algorithm: subset selection}
\begin{algorithmic}[1]
\STATE {\bfseries Input:} Data matrix $A$ and rank parameter $k$.
\STATE {\bfseries Output:} $X\in\R^{n\times m}$ such that $\rank(X)\le k$ and $\|X-A\|_p\le c_{p,k}\cdot \OPT$.
\FOR{$I\in \{S \subseteq [m];\,|S| = k\}$}
\STATE $U\leftarrow A_I$.
\STATE Run $\ell_p$ linear regression over $V$ that minimizes the loss $\|A-UV\|_p$.
\STATE Let $X_{I} = UV$
\ENDFOR
\RETURN $X_I$ which minimizes $\|A - X_I\|_p$ for $I \in \{S \subseteq [m];\,|S| = k\}$.
\end{algorithmic}
\end{algorithm}

\subsection{Our Techniques}
\label{section: our techniques}
In this section, we give a detailed discussion about our techniques in the proofs. We start with the analysis of approximation ratio of \emph{column subset selection} algorithm. 

\noindent{\textbf{Remark}} Throughout this paper, we state the theorems for real matrices. The results can be naturally generalized for complex matrices as well.

\medskip
\noindent{\textbf{Notations:}} We denote by $A \in \R^{n \times m}$ the input matrix, and $A_i$ the $i$-th column of $A$. $A^* = UV$ is the optimal rank-$k$ approximation, where $U \in \R^{n \times k}, V \in \R^{k \times m}$. $\Delta_i = A_i - UV_i$ is the error vector on the $i$-th column, and $\Delta_{li}$ the $l$-th element of vector $\Delta_i$. For any $X \in \R^{n \times k}$, define the error of projecting $A$ onto $X$ by
$\Err(X) = \min_{Y\in \R^{k \times m}} \|A-X Y\|_p.$
Let $J=(j_1, \cdots, j_k) \in [m]^k$ be a subset of $[m]$ with cardinality $k$. We denote $X_J$ as the following column subset in matrix $X$:
$X_J = [X_{j_1}, X_{j_2}, \cdots, X_{j_k}].$
Similarly,  we denote by $X_{dJ}$ the following column subset in the $d$-th dimension of matrix $X$:
$X_{dJ} = [X_{dj_1}, X_{dj_2}, \cdots, X_{dj_k}].$
Denote by $J^*$ the column subset which gives smallest approximation error, i.e.,
$J^* = \argmin_{J \in [m]^k} \Err(A_J).$

\medskip
\noindent{\textbf{Analysis in Previous Work:}}
In order to show that the column subset selection algorithm gives an $\alpha$-approximation, we need to prove that 
\begin{equation}
     \Err(A_{J^*}) \leq \alpha \|\Delta\|_p.
\end{equation}
Directly bounding $\Err(A_{J^*})$ is prohibitive. In \citep{chierichetti2017algorithms}, the authors proved an upper bound of $(k+1)$ in two steps. First, the authors constructed a specific $S \in [m]^k$, and upper bounded $\Err(A_{J^*})$ by $\Err(A_{S})$. Their construction is as follows: $S$ is defined as the minimizer of 
\[S = \argmin_{J \in [m]^k} \frac{|\det(V_J)|}{\prod_{j \in J} \|\Delta_j\|_p}.\]
In the second step, \cite{chierichetti2017algorithms} upper bounded $\Err(A_{S})$ by considering the approximation error on each column $A_i$, and upper bounded the $\ell_p$ distance from $A_i$ to the subspace spanned by $A_S$ using triangle inequality of $\ell_p$ distance. They showed that the distance is at most $(k+1)$ times of $\|\Delta_i\|$, uniformly for all columns $i \in [m]$. Therefore, the approximation ratio is bounded by $(k+1). $Our approach is different from the above analysis in both steps.

\medskip
\noindent{\textbf{Weighted Average:}}
 In the first step, we use a so-called \emph{weighted average} technique, inspired by the approach in \citep{deshpande2006matrix, dan_et_al:LIPIcs:2018:9623}. Instead of using the error of one specific column subset as an upper bound, we use a weighted average over all possible column subsets, i.e.,
\[  \Err^p(A_{J^*}) \leq \sum_{J\in[m]^k} w_J \Err^p(A_J),\]
where the weight $w_J$'s are carefully chosen for each column subset $J$. This weighted average technique captures more information from all possible column subsets, rather than only from one specific subset, and leads to a tighter bound.

\medskip
\noindent{\textbf{Riesz-Thorin Interpolation Theorem:}}
In the second step, unlike \citep{chierichetti2017algorithms} which simply used triangle inequality to prove the upper bound, our technique leads to more refined analysis of upper bounds for the approximation error for each subset $J \in [m]^k$. With the technique of weighted average in the first step, proving a technical inequality (Lemma \ref{Lemma_technical}) concerning the determinants suffices to complete the analysis of approximation ratio. In the proof of this lemma, we introduce several powerful tools from harmonic analysis, the theory of interpolating linear operators. Riesz-Thorin theorem is a classical result in interpolation theory that gives bounds for $L^p$ to $L^q$ operator norm. In general, it is easier to prove estimates within spaces like $L^2$, $L^1$ and $L^{\infty}$. Interpolation theory enables us to generalize results in those spaces to some $L^p$ and $L^q$ spaces with an explicit operator norm. By the Riesz-Thorin interpolation theorem, we are able to prove the lemma by just checking the special cases $p=1, 2, \infty$, and then interpolate the inequality for all the intermediate value of $p$'s. 

\medskip
\noindent{\textbf{Lower Bounds:}} We now discuss the techniques in proving the lower bounds. Our proof is a generalization of \citep{deshpande2006matrix}, which shows that for the special case $p=2$, $\sqrt{k+1}$ is the best possible approximation ratio. Their proof for the lower bound is constructive: they constructed a $(k+1) \times (k+1)$ matrix $A$, such that using any $k$-subset leads to a sub-optimal solution by a factor no less than $(1-\eps)\sqrt{k+1}$. However, since $\ell_p$ norm is not rotationally-invariant in general, it is tricky to generalize their analysis to other values of $p$'s. To resolve the problem, we use a specialized version of their construction, the perturbed Hadamard matrices (see Section \ref{section:lower_bounds} for details), as they have nice symmetricity and are much easier to analyze. We give an example of special case $k=3$ for better intuition:
\[
A=\left( \begin{matrix}\eps & \eps & \eps& \eps \\
1 & 1 & -1 & -1 \\
1 & -1& 1 & -1 \\
1 & -1 & -1 & 1 \end{matrix}\right).
\]
Here $\eps$ is a positive constant close to $0$. We note that $A$ is very close to a rank-$3$ matrix: if we replace the first row by four zeros, then it becomes rank-$3$. Thus, the optimal rank-$3$ approximation error is at most $(4\eps^p)^{1/p} = 4^{1/p} \eps$. Now we consider the column subset selection algorithm. For example, we use the first three columns $A_1, A_2, A_3$ to approximate the whole matrix --- the error only comes from the fourth column. We can show that when $\eps$ is small, the projection of $A_4$ to $\myspan{A_1, A_2, A_3}$ is very close to 
\[-A_1 - A_2 - A_3 = \left( \begin{matrix} -3\eps, -1 ,  -1 ,  1 \end{matrix}\right)^T.\]
Therefore, the column subset selection algorithm achieve about $4\eps$ error on this matrix, which is a $4^{1-\frac{1}{p}}$ factor from being optimal. The similar construction works for any integer $k=2^r-1, r \in \mathbb{Z}^+$, where the lower bound is replaced by $(k+1)^{1-\frac{1}{p}}$, also matches with our upper bound exactly when $p\geq 2$. 
\section{Analysis of Approximation Ratio}
In this section, we will prove Theorem \ref{theorem: main result}. Recall that our goal is to bound $\Err(A_{J^*})$. We first introduce two useful lemmas. Lemma \ref{Lemma_err_Aj} gives an upper bound on approximation error by choosing a single arbitrary column subset $A_J$. Lemma \ref{Lemma_technical} is our main technical lemma. 

\begin{lemma}\label{Lemma_err_Aj}
If $J$ satisfies $ \det(V_{J}) \neq 0$, then the approximation error of $A_J$ can be upper bounded by
\[\Err^p (A_J) \leq \|\Delta-\Delta_J V_J^{-1} V\|_p^p.\]
\end{lemma}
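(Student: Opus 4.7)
The plan is to exhibit an explicit feasible $Y \in \R^{k \times m}$ in the regression problem defining $\Err(A_J)$ whose objective value equals the right-hand side. Since $\Err(A_J) = \min_{Y} \|A - A_J Y\|_p$, any concrete choice of $Y$ yields an upper bound, and raising to the $p$-th power at the end gives the stated inequality.

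First, I would rewrite $A_J$ using the decomposition $A = UV + \Delta$. Because picking columns commutes with matrix addition and with left-multiplication by $U$, we have $A_J = U V_J + \Delta_J$. The hypothesis $\det(V_J) \neq 0$ tells us that $V_J \in \R^{k\times k}$ is invertible, which suggests the candidate
\begin{equation*}
Y \;=\; V_J^{-1} V \;\in\; \R^{k \times m}.
\end{equation*}
This is the natural choice because it is designed to reproduce the full factor $V$ on the right: $V_J \cdot Y = V$.

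Second, I would compute the corresponding residual. Substituting the candidate gives
\begin{equation*}
A_J Y \;=\; (U V_J + \Delta_J) V_J^{-1} V \;=\; U V + \Delta_J V_J^{-1} V,
\end{equation*}
so that
\begin{equation*}
A - A_J Y \;=\; (U V + \Delta) - (U V + \Delta_J V_J^{-1} V) \;=\; \Delta - \Delta_J V_J^{-1} V.
\end{equation*}
Taking entry-wise $\ell_p$ norms and using the definition of $\Err(A_J)$ as a minimum over $Y$, we obtain $\Err(A_J) \leq \|A - A_J Y\|_p = \|\Delta - \Delta_J V_J^{-1} V\|_p$. Raising both sides to the $p$-th power (monotone on nonnegative reals) yields the lemma.

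There is essentially no obstacle here; the only nontrivial step is spotting the right candidate $Y = V_J^{-1} V$, which is forced by the requirement that $A_J Y$ reproduce the global low-rank part $UV$ exactly so that only the error terms $\Delta$ and $\Delta_J V_J^{-1} V$ survive in the residual. Invertibility of $V_J$ is exactly what makes this construction well-defined, which explains the hypothesis $\det(V_J) \neq 0$.
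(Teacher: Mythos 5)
Your proposal is correct and follows essentially the same route as the paper: plug in the candidate $Y = V_J^{-1} V$, use $A = UV + \Delta$ and $A_J = UV_J + \Delta_J$ so the $UV$ term cancels in the residual, and conclude by the definition of $\Err$ as a minimum.
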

\begin{lemma}\label{Lemma_technical}
Let $S= \{s_{ij}\} \in \C^{k \times m}$ be a complex matrix, $r = (r_1, \cdots, r_m)$ be $m$-dimensional complex vector, and $T = \left[\begin{matrix}
	r \\ S
	\end{matrix}\right]\in \C^{(k+1) \times m},$ 
	then we have
	\[\sum_{I \in [m]^{k+1}
	} \left| \det (T_I)\right|^p \leq C_{p,k}\sum_{l=1}^m |r_l|^p \sum_{J \in [m]^k} \left| \det (S_J)\right|^p, \]
	where
	\[C_{p,k} = c_{p,k} ^p=\begin{cases}
	(k+1), \quad 1\leq p \leq 2, \\
	(k+1)^{p-1}, \quad p \geq 2.
	\end{cases}\]
\end{lemma}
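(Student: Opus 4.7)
The plan is to recast the inequality as a bilinear operator norm bound and then interpolate. By Laplace expansion of $\det(T_I)$ along the first row (which is $r$), one has
\[
\det(T_I) \;=\; \sum_{l \in I} \varepsilon_{I,l}\, r_l \, \det(S_{I \setminus \{l\}})
\]
for each $(k+1)$-subset $I \in [m]^{k+1}$, where $\varepsilon_{I,l} \in \{\pm 1\}$ are the cofactor signs. Consequently, the left-hand side depends on $r$ and on the vector $a = (\det S_J)_{J \in [m]^k}$ only through the bilinear map $B(r,a)_I := \sum_{l \in I} \varepsilon_{I,l}\, r_l\, a_{I \setminus \{l\}}$. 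It therefore suffices to prove the abstract estimate $\sum_I |B(r,a)_I|^p \le C_{p,k}\, \|r\|_p^p\, \|a\|_p^p$ for all $r \in \mathbb{C}^m$ and all $a \in \mathbb{C}^{\binom{[m]}{k}}$; substituting $a_J = \det(S_J)$ then recovers the lemma.

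Next I would verify the three endpoint estimates flagged in the proof outline, which are all elementary. At $p = 1$, the triangle inequality together with an exchange of the sums over $I$ and $l \in I$ gives $\sum_I |B(r,a)_I| \le \sum_l |r_l| \sum_{J \not\ni l} |a_J| \le \|r\|_1 \|a\|_1$. At $p = 2$, applying Cauchy--Schwarz inside the Laplace sum yields $|B(r,a)_I|^2 \le (k+1) \sum_{l \in I} |r_l|^2\, |a_{I \setminus l}|^2$, and summing over $I$ and swapping sums gives $\|B(r,a)\|_2^2 \le (k+1)\,\|r\|_2^2\,\|a\|_2^2$. At $p = \infty$, the triangle inequality on a sum of at most $k+1$ terms gives $\|B(r,a)\|_\infty \le (k+1)\,\|r\|_\infty\,\|a\|_\infty$. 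All three are consistent with the unified target $\|B(r,a)\|_p \le (k+1)^{1-1/p}\,\|r\|_p\,\|a\|_p$.

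I would then invoke the bilinear (multilinear) Riesz--Thorin interpolation theorem in the two regimes matching the piecewise definition of $C_{p,k}$. For $p \in [2,\infty]$, interpolating the $L^2 \times L^2 \to L^2$ bound $\sqrt{k+1}$ with the $L^\infty \times L^\infty \to L^\infty$ bound $k+1$ at parameter $\theta = 1 - 2/p$ yields $\|B(r,a)\|_p \le (k+1)^{(1-\theta)/2 + \theta}\,\|r\|_p\,\|a\|_p = (k+1)^{1-1/p}\,\|r\|_p\,\|a\|_p$; raising to the $p$-th power gives exactly $C_{p,k} = (k+1)^{p-1}$. For $p \in [1,2]$, interpolating between the $p=1$ endpoint (constant $1$) and the $p=2$ endpoint (constant $\sqrt{k+1}$) at $\theta = 2 - 2/p$ again produces $(k+1)^{1-1/p}$ as the operator norm bound; since $p - 1 \le 1$, the $p$-th power is at most $k+1 = C_{p,k}$. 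Plugging $a_J = \det(S_J)$ back in completes the argument.

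The main obstacle is the correct setup of the Riesz--Thorin step. A naive application of the linear Riesz--Thorin theorem to $r \mapsto B(r,a)$ for fixed $a$ fails to interpolate cleanly, because the three endpoint operator norms involve three different norms $\|a\|_1$, $\|a\|_2$, $\|a\|_\infty$ of $a$ rather than a single $\|a\|_p$. I would therefore invoke the bilinear version of Riesz--Thorin (a standard corollary of Stein's complex interpolation), which handles both arguments simultaneously; alternatively, one can linearize $B$ on the tensor product $\ell^p_m \otimes \ell^p_{\binom{[m]}{k}}$ and apply the ordinary Riesz--Thorin theorem there. Either route reduces the entire argument to verifying the three elementary endpoint inequalities above.
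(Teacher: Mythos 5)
Your overall framework matches the paper's: Laplace expansion along $r$, reformulation as a bound on a bilinear map, verification at $p=1,2,\infty$, and multilinear Riesz--Thorin. But there is a genuine gap at the $p=2$ endpoint, and a compensating normalization error in your reduction step, and together they mean you prove a statement that is weaker than the lemma by a factor of $k+1$ in $C_{p,k}$.

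First, the reduction. The lemma is stated with sums over \emph{ordered} tuples $I\in[m]^{k+1}$ and $J\in[m]^k$. Each unordered $(k+1)$-subset appears $(k+1)!$ times on the left and each unordered $k$-subset appears $k!$ times on the right, so the lemma is equivalent to
\[
\sum_{I\in\binom{[m]}{k+1}} |\det(T_I)|^p \;\le\; \frac{C_{p,k}}{k+1}\,\|r\|_p^p \sum_{J\in\binom{[m]}{k}} |\det(S_J)|^p,
\]
i.e.\ $\|B(r,a)\|_p \le \frac{c_{p,k}}{(k+1)^{1/p}}\,\|r\|_p\,\|a\|_p = \max\bigl(1,(k+1)^{1-2/p}\bigr)\,\|r\|_p\,\|a\|_p$, not $\|B(r,a)\|_p \le c_{p,k}\|r\|_p\|a\|_p$ as you claim. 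In particular the target constant at $p=2$ is $1$, not $\sqrt{k+1}$.

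Second, the $p=2$ endpoint. Your Cauchy--Schwarz estimate $|B(r,a)_I|^2 \le (k+1)\sum_{l\in I}|r_l|^2|a_{I\setminus l}|^2$ throws away the alternating signs $\varepsilon_{I,l}$ and yields $\|B(r,a)\|_2 \le \sqrt{k+1}\,\|r\|_2\|a\|_2$, which is off by $\sqrt{k+1}$. The correct constant is $1$; to get it you must exploit the antisymmetry. Expanding $|B(r,a)_I|^2$ and summing over $I$, the diagonal terms give $\|r\|_2^2\|a\|_2^2$, and the cross terms $\sum_{t\neq s}(-1)^{t+s}r_{i_t}a_{I_{-t}}\bar r_{i_s}\bar a_{I_{-s}}$ collapse, after extending $a$ antisymmetrically to ordered tuples, to $-$ (sum of squared moduli), hence are non-positive. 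This is exactly what the paper does in its appendix (Case 3). (For $k=1$, $m=2$, $p=2$ the gap is concrete: your estimate gives $|r_1a_2-r_2a_1|^2 \le 2\|r\|_2^2\|a\|_2^2$, whereas Cauchy--Schwarz applied to $(r_1,-r_2)$ and $(\bar a_2,\bar a_1)$ gives the sharp constant $1$, which is what the lemma needs.)

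Because of the first error, your interpolation \emph{appears} to hit the target $c_{p,k}$, but the target you aimed at is $(k+1)^{1/p}$ times too large; the weak $p=2$ endpoint inflates your interpolated bound by exactly that factor, so the argument looks internally consistent while proving only $\sum_{I\in[m]^{k+1}}|\det(T_I)|^p \le (k+1)\,C_{p,k}\|r\|_p^p\sum_J|\det(S_J)|^p$. To fix it, restate the reduced target with the correct constant $\max(1,(k+1)^{1-2/p})$ and replace the $p=2$ estimate by the cancellation argument. Also note that your suggested fallback of ``linearizing on the tensor product $\ell^p_m\otimes\ell^p_{\binom{[m]}{k}}$ and applying the ordinary Riesz--Thorin there'' does not work as stated, since $\ell^p\otimes\ell^p$ does not carry an $\ell^p$ structure that makes Riesz--Thorin directly applicable; the multilinear Riesz--Thorin theorem (as the paper cites) is the right tool.
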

We first show that Theorem \ref{theorem: main result} has a clean proof using the two lemmas, as stated below.
\begin{proof} of Theorem \ref{theorem: main result}:
 We can WLOG assume that $\rank(V) = k$. In fact, if $\rank(A) < k$, then of course $\Err^p(A_{J^*}) = 0$ and there is nothing to prove. Otherwise if $\rank(A) \geq k,$ then by the definition of $V$, we know that $\rank(V) = k$. 

We will upper bound the approximation error of the best column by a weighted average of $ \Err^p(A_J) $. In other words, we are going to choose a set of non-negative weights $w_J$ such that $\sum_{J\in[m]^k} w_J = 1$, and upper bound $ \Err^p(A_{J^*}) $ by
\[  \Err^p(A_{J^*}) = \min_{J \in [m]^k} \Err^p(A_J)\leq \sum_{J\in[m]^k} w_J \Err^p(A_J).\]
In the following analysis, our choice of $w_J$ will be 
\[ w_J = \frac{|\det(V_{J})|^p}{\sum_{I \in [m]^k}|\det(V_{I})|^p}.\]
Since $\rank(V) = k$, $w_J$ are well-defined. We first prove 
\begin{equation}\label{eq1}
	|\det(V_{J})|^p\Err^p(A_{J^*}) \leq \sum_{d=1}^n\sum_{i=1}^m\left|\det\left(\begin{matrix}
	\Delta_{dL}  \\
	V_{L}
	\end{matrix}\right)\right|^p,
\end{equation}
where we denote $L = (i, J) = (i, j_1, \cdots, j_k) \in [m]^{k+1}$.

In fact, when $\text{det}(V_J) = 0$, of course LHS of (\ref{eq1}) = 0 $\leq$ RHS. When $\text{det}(V_J) \neq 0$, we know that $V_J$ is invertible. By Lemma \ref{Lemma_err_Aj},
\begin{equation*}
	\begin{split}
	|\det(V_{J})|^p\Err^p(A_{J^*}) \leq& |\det(V_{J})|^p\|\Delta-\Delta_J V_J^{-1} V\|_p^p\\
	=& \|\det(V_{J})\left(\Delta-\Delta_J V_J^{-1} V\right)\|_p^p\\
	=& \sum_{i=1}^m \|\det(V_{J})\left(\Delta_i-\Delta_J V_J^{-1} V_i\right)\|_p^p\\
	=& \sum_{d=1}^n\sum_{i=1}^m |\det(V_{J})\left(\Delta_{di}-\Delta_{dJ} V_J^{-1} V_i\right)|^p\\
	=& \sum_{d=1}^n\sum_{i=1}^m \left|\det\left(\begin{matrix}
	\Delta_{di} & \Delta_{dJ} \\
	V_i & V_J
	\end{matrix}\right)\right|^p\\
	=& \sum_{d=1}^n\sum_{i=1}^m \left|\det\left(\begin{matrix}
	\Delta_{d L}  \\
	V_{L}
	\end{matrix}\right)\right|^p.
	\end{split}
\end{equation*} 
The second to last equality follows from the Schur's determinant identity. Therefore \eqref{eq1} holds, and

\begin{equation*}
	\begin{split}
	\Err^p(A_{J^*}) &\leq \sum_{J \in [m]^k} \frac{|\det(V_{J})|^p}{\sum_{I \in [m]^k}|\det(V_{I})|^p} \Err^p(A_J) \\
	&\leq  \sum_{d=1}^n \left( \frac{1}{\sum_{I \in [m]^k}|\det(V_{I})|^p} \sum_{J \in [m]^k} \sum_{i=1}^m \left|\det\left(\begin{matrix}
	\Delta_{dL}  \\
	V_{L}
	\end{matrix}\right)\right|^p\right)\\
	&=  \sum_{d=1}^n \left( \frac{1}{\sum_{I \in [m]^k}|\det(V_{I})|^p} \sum_{L \in [m]^{k+1}} \left|\det\left(\begin{matrix}
	\Delta_{dL}  \\
	V_{L}
	\end{matrix}\right)\right|^p\right).
	\end{split}
\end{equation*}
 
By Lemma \ref{Lemma_technical}, 
\begin{equation*}
	 \frac{1}{\sum_{I \in [m]^k}|\det(V_{I})|^p} \sum_{L \in [m]^{k+1}} \left|\det\left(\begin{matrix}
	\Delta_{dL}  \\
	V_{L}
	\end{matrix}\right)\right|^p \leq C_{p,k} \sum_{j=1}^m |\Delta_{dj}|^p.
\end{equation*}
Therefore, 
\begin{equation*}
	\begin{split}
	\Err^p(A_{J^*}) \leq \sum_{d=1}^n \left(  C_{p,k} \sum_{j=1}^m |\Delta_{dj}|^p \right)
	= C_{p,k} \sum_{j=1}^m \|\Delta_{j}\|_p^p 
	 = C_{p,k} \OPT^p,
	\end{split}
\end{equation*}
which means
\[\Err(A_{J^*})\leq C_{p,k}^{1/p} \OPT = c_{p,k} \OPT. \]
\end{proof}

Therefore, we only need to prove the two lemmas. Lemma \ref{Lemma_err_Aj} is relatively easy to prove.
\begin{proof}of Lemma \ref{Lemma_err_Aj}:
Recall that by definition of $\Delta_i$, $A_i = U V_i + \Delta_i$,
\begin{align*}
\Err^p(A_J) &= \min_{Y\in \R^{k \times m}} \|A-A_J Y\|_p^p \\
& \leq \|A-A_J V_J^{-1} V\|_p^p \\
&= \|(UV+\Delta)-(UV_J+\Delta_J) V_J^{-1} V\|_p^p \\
&= \|\Delta-\Delta_J V_J^{-1} V\|_p^p. 
\end{align*}
\end{proof}
The main difficulty in our analysis comes from Lemma \ref{Lemma_technical}. The proof is based on Riesz-Thorin interpolation theorem from harmonic analysis. Although the technical details in verifying a key inequality \eqref{p_norm_ineq} are quite complicated, the remaining part which connects Lemma \ref{Lemma_technical} to the Riesz-Thorin interpolation theorem is not that difficult to understand. Below we give a proof to Lemma \ref{Lemma_technical} without verifying \eqref{p_norm_ineq}, and leave the complete proof of \eqref{p_norm_ineq} in the appendix.
\begin{proof}of Lemma \ref{Lemma_technical}:
  We first state a simplified version of the Riesz-Thorin interpolation theorem, which is the most convenient-to-use version for our proof. The general version can be found in the Appendix. 
\begin{lemma}\label{lem:simplified_riesz_thorin}
[Simplified version of Riesz-Thorin] Let $\Lambda: \C^{n_1} \times \C^{n_2} \rightarrow \C^{n_0} $ be a multi-linear operator, such that the following inequalities
\[\|\Lambda(a,b) \|_{p_0} \leq M_{p_0} \|a\|_{p_0} \|b \|_{p_0}. \]
\[\|\Lambda(a,b) \|_{p_1} \leq M_{p_1} \|a\|_{p_1} \|b \|_{p_1}. \]
hold for all $a \in \C^{n_1},  b \in \C^{n_2}$, then we have
\[\|\Lambda(a,b) \|_{p_\theta} \leq M_{p_0}^{1-\theta}M_{p_1}^{\theta} \|a\|_{p_\theta} \|b \|_{p_\theta}. \]
holds for all $\theta \in [0,1]$, where 
$$\frac 1{p_\theta} := \frac{1-\theta}{p_0}+\frac \theta{p_1}.$$
\end{lemma}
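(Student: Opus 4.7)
The plan is to run the standard complex-interpolation proof of the Riesz--Thorin theorem, adapted to this bilinear setting. First, by the duality $(\ell^{p_\theta})^* = \ell^{q_\theta}$ with $1/p_\theta + 1/q_\theta = 1$, I rewrite the target norm as a bilinear pairing:
\[
\|\Lambda(a,b)\|_{p_\theta} = \sup\{|\langle \Lambda(a,b), c\rangle| : c \in \C^{n_0},\, \|c\|_{q_\theta} \le 1\},
\]
where $\langle u, v\rangle := \sum_i u_i v_i$ is the bilinear pairing (not the Hermitian inner product; the choice avoids anti-holomorphic factors later). By homogeneity I may assume $\|a\|_{p_\theta} = \|b\|_{p_\theta} = \|c\|_{q_\theta} = 1$, so the goal reduces to proving $|\langle \Lambda(a,b), c\rangle| \le M_{p_0}^{1-\theta} M_{p_1}^{\theta}$.

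Next I would set $\alpha(z) = (1-z)/p_0 + z/p_1$, $\beta(z) = (1-z)/q_0 + z/q_1$, and define entire vector-valued functions on the strip $\mathcal{S} = \{z \in \C : 0 \le \Re z \le 1\}$ by
\[
a_j(z) = |a_j|^{p_\theta \alpha(z)} \operatorname{sgn}(a_j), \quad b_j(z) = |b_j|^{p_\theta \alpha(z)} \operatorname{sgn}(b_j), \quad c_j(z) = |c_j|^{q_\theta \beta(z)} \operatorname{sgn}(c_j),
\]
with the convention $|0|^w \operatorname{sgn}(0) = 0$, where $\operatorname{sgn}(w) = w/|w|$ for $w \in \C \setminus \{0\}$. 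Since $p_\theta \alpha(\theta) = q_\theta \beta(\theta) = 1$, this gives $(a(\theta), b(\theta), c(\theta)) = (a,b,c)$. Using $||u|^w| = |u|^{\Re w}$, a direct calculation confirms the boundary norm identities: $\|a(z)\|_{p_0} = \|b(z)\|_{p_0} = \|c(z)\|_{q_0} = 1$ whenever $\Re z = 0$, and the analogous identities with subscripts $p_1, p_1, q_1$ hold whenever $\Re z = 1$.

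Now define $F(z) := \langle \Lambda(a(z), b(z)), c(z)\rangle$. Each component of $a(z), b(z), c(z)$ is entire, and $\Lambda$ is multilinear, so $F$ is entire; moreover, since $|a_j(z)|$, $|b_j(z)|$, and $|c_j(z)|$ each depend only on $\Re z$, $F$ is bounded on $\mathcal{S}$. By Hölder's inequality combined with the hypothesized bounds,
\[
|F(z)| \le \|\Lambda(a(z), b(z))\|_{p_0}\, \|c(z)\|_{q_0} \le M_{p_0} \quad \text{on } \Re z = 0,
\]
and symmetrically $|F(z)| \le M_{p_1}$ on $\Re z = 1$. Hadamard's three-lines theorem then yields $|F(\theta)| \le M_{p_0}^{1-\theta} M_{p_1}^{\theta}$, and since $F(\theta) = \langle \Lambda(a,b), c\rangle$, taking the supremum over admissible $c$ completes the proof.

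The main obstacles are essentially bookkeeping: verifying the four boundary norm identities via the explicit exponents, and checking that $F$ is bounded on the closed strip so three-lines applies. Both are routine in this finite-dimensional, entrywise-defined setup. The real conceptual content is the choice of holomorphic families whose boundary norms are matched to $(p_0, p_1)$ and $(q_0, q_1)$, which packages the two endpoint estimates into a single scalar analytic function to which the maximum principle (Hadamard) can be applied directly.
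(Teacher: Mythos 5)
Your proof is correct, but it takes a different route from the paper: the paper does not prove this lemma at all, it simply states the general multilinear Riesz--Thorin theorem for simple functions on measure spaces (citing Lemma 8.5 in \cite{mashreghi2009representation}) and observes that the finite-dimensional bilinear statement is a direct corollary, with each $\C^{n_i}$ viewed as the functions on an $n_i$-point set with counting measure. You instead reprove the result from scratch by the classical Thorin argument: dualize the target norm via the bilinear pairing against $c$ with $\|c\|_{q_\theta}\le 1$, build the holomorphic families $a(z),b(z),c(z)$ whose moduli depend only on $\Re z$ and whose boundary norms match the endpoint exponents, and apply Hadamard's three-lines theorem to the entire bounded function $F(z)=\langle\Lambda(a(z),b(z)),c(z)\rangle$, using H\"older plus the two endpoint hypotheses on the lines $\Re z=0$ and $\Re z=1$. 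The exponent bookkeeping checks out (in particular $p_\theta\alpha(\theta)=q_\theta\beta(\theta)=1$ and $1/q_\theta=(1-\theta)/q_0+\theta/q_1$), and your use of the bilinear rather than Hermitian pairing correctly avoids anti-holomorphic factors. What the paper's citation buys is brevity; what your argument buys is a self-contained proof that also transparently covers the endpoint $p_1=\infty$ actually used in the application (where $\alpha(z)=(1-z)/p_0$ and the corresponding boundary norm is the $\ell_\infty$ norm of unimodular-or-zero entries). The only things you might add are one sentence dispatching the degenerate cases ($a$ or $b$ zero, $M_{p_i}=0$, $p_0=p_1$, and $p_\theta=\infty$ or $q_\theta=\infty$, which force trivial situations and are needed so the exponents $p_\theta\alpha(z)$, $q_\theta\beta(z)$ and the duality step are well defined), but these are routine and do not constitute a gap.
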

Riesz-Thorin theorem is a classical result in interpolation theory that gives bounds for $L^p$ to $L^q$ operator norm. In general, it is easier to prove estimates within spaces like $L^2$, $L^1$ and $L^{\infty}$. Interpolation theory enables us to generalize results in those spaces to some $L^p$ and $L^q$ spaces in between with an explicit operator norm. In our application, the $X_i$ is a set of $n_i$ elements and $V_i$ is $\C^{n_i}$, the space of functions on $n_i$ elements. 
\\ Now we prove Lemma \ref{Lemma_technical}.  In fact, by symmetricity, Lemma \ref{Lemma_technical} is equivalent to 
\[(k+1)! \sum_{I \in \binom{[m]}{k+1}
} \left| \det (T_I)\right|^p \leq k! C_{p,k} \sum_{t=1}^m |r_t|^p \sum_{J \in \binom{[m]}{k}} \left| \det (S_J)\right|^p. \]

Here, $\binom{[m]}{k} = \{(i_1, \cdots, i_k)| 1\leq i_1 < i_2 < \cdots < i_k \leq m\}$ denotes the $k$-subsets of $[m]$. 

Taking $\frac{1}{p}$-th power on both sides, we have the following equivalent form

\[\left(\sum_{I \in \binom{[m]}{k+1}} \left| \det (T_I)\right|^p \right)^{\frac{1}{p}} \leq \frac{c_{p,k}}{(k+1)^{\frac{1}{p}}}\left( \sum_{t=1}^m |r_t|^p \right)^{\frac{1}{p}} \left(\sum_{J \in \binom{[m]}{k}} \left| \det (S_J)\right|^p\right)^{\frac{1}{p}}.\]

By Laplace expansion on the first row of $\det(T_I)$, we have for every $I=(i_1, \cdots, i_{k+1}) \in \binom{[m]}{k+1}$
\[\det(T_I)= \sum_{t=1}^{k+1} (-1)^{t+1}r_{i_t} \det(S_{I_{-t}}).\]
Here, $I_{-t} = (i_1, \cdots,i_{t-1},i_{t+1}, \cdots, i_{k+1}) \in \binom{[m]}{k}$. 

This motivates us to define the following multilinear map $\Lambda: \C^{\binom{[m]}{1} } \times \C^{\binom{[m]}{k} }\rightarrow  \C^{\binom{[m]}{k+1} }$: for all $\{a_t\}_{t \in {\binom{[m]}{1} } } \in \C^{\binom{[m]}{1} }, \{b_J\}_{J \in {\binom{[m]}{k} } } \in \C^{\binom{[m]}{k} } $, and index set $I = (i_1, \cdots, i_{k+1}) \in \binom{[m]}{k+1} $, $[\Lambda(a, b)]_{I}$ is defined as

\[ [\Lambda(a, b)]_{I} = \sum_{t=1}^{k+1} (-1)^{t+1}a_{i_t} b_{I_{-t}}.\]

Now, by letting $a_t = r_t, b_J = \det(S_J)$, the inequality can be written as

\begin{equation}
    \|\Lambda(a, b)\|_p \leq \frac{c_{p,k}}{(k+1)^{\frac{1}{p}}} \|a\|_p \|b\|_p = \max\left(1, (k+1)^{1- \frac{2}{p}}\right) \|a\|_p \|b\|_p \label{p_norm_ineq}.
\end{equation}

Let $M_p =  \max\left(1, (k+1)^{1- \frac{2}{p}}\right)$, the inequality can be rewritten as $  \|\Lambda(a, b)\|_p \leq M_p\|a\|_p \|b\|_p $. We denote \[\frac{1}{p}=  \frac{1-\theta}{p_0} +\frac{\theta}{p_1}. \]
here, when $p \in [1,2)$, we choose $p_0 = 1, p_1 = 2$; when $p \in [2,+\infty)$, we choose $p_0 = 2, p_1 = +\infty$. Then, we can observe the following nice property about $M_p$:

\begin{equation}
M_p = M_{p_0}^{1-\theta}M_{p_1}^{\theta}
\end{equation}
This is exactly the same form as Riesz-Thorin Theorem! Hence, we \emph{only} need to show \eqref{p_norm_ineq} holds for $p = 1, 2, \infty$, then applying Riesz-Thorin proves all the intermediate cases $p \in (1,2) \cup (2,\infty)$ immediately. 

We leave the complete proof of \eqref{p_norm_ineq} in the appendix.
\end{proof}
\section{Lower Bounds}\label{section:lower_bounds}
In this section, we give a proof sketch of Theorem \ref{theorem:lower bound}.  The proof is constructive: we prove the theorem by showing for all $\eps>0$, we can construct a matrix $A(\eps)$, such that selecting every $k$ columns of $A(\eps)$ leads to an approximation ratio at least $\frac{(k+1)^{1-\frac{1}{p}}}{1+o_{\eps}(1)} $. Then, the theorem follows by letting $\eps \rightarrow 0^+$. Our choice of $A(\eps)$ is a perturbation of Hadamard matrices.

Throughout the proof, we assume that $k = 2^r-1$, for some $r \in \mathbb{Z}^+$, and $\eps>0$ is an arbitrarily small constant. We consider the well known Hadamard matrix of order $(k+1)=2^r$, defined below:
\[H^{(1)} = 1,\]
\[H^{(2^l)} = \left(\begin{matrix}
 H^{(2^{l-1})} & H^{(2^{l-1})} \\
H^{(2^{l-1})} & -H^{(2^{l-1})} 
\end{matrix}\right), l\geq 1.\]

Now we can define $A(\eps)$, the construction of lower bound instance: it is a perturbation of $H$ by replacing all the entries on the first row by $\eps$, i.e.,
\begin{equation}
	A(\eps)_{ij}  = \begin{cases}
	\eps {\quad \rm~~ when \quad} i=1, \\
	 H_{ij} { \rm  ~~when \quad} i\neq 1.
	\end{cases}
\end{equation}
We can see that $A(\eps)$ is close to a rank-$k$ matrix. In fact, $A(0)$ has rank at most $k$. Therefore, we can upper bound ${\OPT}$ by
\begin{equation}\label{eqn:ub_opt}
    {\OPT} \leq \|A(\eps)  - A(0)\|_p = \left( (k+1)\eps^p \right)^{1/p} = (k+1)^{1/p} \eps.
\end{equation}{}
The remaining work is to give a lower bound on the approximation error using any $k$ columns. For simplicity of notations, we use $A$ as shorthand for $A(\eps)$ when it's clear from context. Say we are using all $(k+1)$ columns except the $j$-th, i.e. the column subset is $A_{[k+1]- \{j\}}$. Obviously, we achieve zero error on all the columns other than the $j$-th. Therefore, the approximation error is essentially the $\ell_p$ distance from $A_j$ to $\myspan{A_{[k+1]- \{j\}}}$. We can show that the  $\ell_p$ projection from $A_j$ to $\myspan{A_{[k+1]- \{j\}}}$ is very close to $ \sum_{i\neq j} (-A_i)$, in other words,
\begin{equation}\label{eqn:informal_lb_css}
    \Err(A_{[k+1]- \{j\}})  = (1 - o(1)) \|A_j - \sum_{i \neq j}  (-A_i)\|_p = (1-o(1)) (k+1) \eps
\end{equation}{}
The theorem follows by combining \eqref{eqn:ub_opt} and \eqref{eqn:informal_lb_css}. The complete proof can be found in the appendix.
\section{Analysis of Efficient Algorithms}
One drawback of the column subset selection algorithm is its time complexity - it requires \\ $O(m^k \poly(n))$ time, which is not desirable since it's exponential in $k$. However, several more efficient algorithms \citep{chierichetti2017algorithms} are designed based on it. Our tighter analysis on Algorithm \ref{algorithm: subset selection} implies better approximation guarantees on these algorithms as well. The improved bounds can be stated as follows:
\begin{algorithm}[t]
	\caption{\citep{chierichetti2017algorithms}\label{alg:randomcolumns} \SelectColumns($k, A$): Selecting $O(k \log m)$ columns of $A$.}
	\begin{algorithmic}[1]
		\STATE {\bfseries Input:} Data matrix $A$ and rank parameter $k$.
		\STATE {\bfseries Output:} $O(k \log m)$ columns of $A$ 
		\IF{number of columns of $A \le 2k$}
		\STATE return all the columns of $A$
		\ELSE
		\REPEAT
		\STATE Let $R$ be uniform at random $2k$ columns of $A$
		\UNTIL {at least $(1/10)$-fraction columns of $A$ are $\lambda_p$-approximately covered}
		\STATE Let $A_{\overline{R}}$ be the columns of $A$ not approximately
		covered by $R$
		\STATE return $A_R \cup$ \SelectColumns($k, A_{\overline{R}}$)
		\ENDIF
	\end{algorithmic}
\end{algorithm}
\begin{theorem}
\label{theorem: algorithm 2}
  Algorithm \ref{alg:randomcolumns}, which runs in $\poly(m,n,k)$ time and selects $k\log m$ columns, is a bi-criteria $O(c_{p,k}) = O((k+1)^{\max(1/p, 1-1/p)})$ approximation algorithm.
\end{theorem}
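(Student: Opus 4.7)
The plan is to adapt the analysis framework of Algorithm \ref{alg:randomcolumns} from \cite{chierichetti2017algorithms}, substituting their $O(k+1)$ per-subset approximation bound with our sharper $c_{p,k}$ bound from Theorem \ref{theorem: main result} at each stage. The algorithm recursively samples a $2k$-column subset $R$ from the current matrix until one sample $\lambda_p$-approximately covers at least a $(1/10)$-fraction of the columns (where $A_j$ is $\lambda_p$-covered by $R$ if $\min_x \|A_j - A_R x\|_p \le \lambda_p \cdot \|\Delta_j\|_p$), and then recurses on the uncovered columns. The structural facts---polynomial expected running time per level, $O(\log m)$ recursion depth, and therefore $O(k\log m)$ columns overall---follow directly from the algorithmic template of \cite{chierichetti2017algorithms} once we establish the desired success probability at each level. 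The only quantity we need to change is $\lambda_p$ itself.

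First, I would establish the sampling lemma: a uniformly random $2k$-column subset $R$ $\lambda_p$-covers at least a $1/10$-fraction of the remaining columns with constant probability, for $\lambda_p = O(c_{p,k})$. The argument proceeds in two stages. By Theorem \ref{theorem: main result} applied to any $(k{+}1)$-subset of columns, one can use $k$ of them to $c_{p,k}$-approximate the remaining one in $\ell_p$. A Markov/averaging argument over a random $2k$-subset $R$ then shows that for a constant fraction of columns $j \notin R$, the subset $R$ $O(c_{p,k})$-covers $A_j$ relative to its per-column error $\|\Delta_j\|_p$. This is precisely the step where we substitute our improved Theorem \ref{theorem: main result} for the $(k+1)$ factor used by \cite{chierichetti2017algorithms}; all of the associated union-bound/Markov computations go through unchanged otherwise.

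Finally, to get the overall approximation factor I would aggregate errors across recursion levels. Partition columns by the level $i \in \{1,\ldots,t\}$ at which they are first covered, giving disjoint sets $C_1,\ldots,C_t$ with $t = O(\log m)$. Because each column contributes its own term $\|\Delta_j\|_p^p$ to $\OPT^p$ and each column appears in exactly one $C_i$, disjointness gives
\[
\|A - A_R X\|_p^p \;\le\; \sum_{i=1}^{t} \sum_{j\in C_i} \lambda_p^p\, \|\Delta_j\|_p^p \;\le\; \lambda_p^p \cdot \OPT^p,
\]
yielding the claimed $O(c_{p,k})$ bound without incurring any $\log m$ loss across the recursion.

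The main obstacle is the sampling lemma: the chain of Markov/averaging steps in \cite{chierichetti2017algorithms} threads the approximation factor through multiple inequalities, and careful bookkeeping is needed to guarantee each invocation of the per-subset bound is replaced by $c_{p,k}$ rather than $k+1$, with no stray factors of $k$ slipping in. A secondary care point is to confirm that the $\lambda_p$-coverage condition used inside the UNTIL loop is the same $\lambda_p$ delivered by the sampling lemma, so that the clean disjoint-support aggregation above applies verbatim.
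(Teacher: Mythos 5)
Your overall plan matches the paper's approach: re-run the Chierichetti et al.\ analysis of Algorithm~\ref{alg:randomcolumns} with $c_{p,k}$ substituted for $k+1$. The paper's appendix does exactly this, stating that Lemma~\ref{lem:cover} and the main theorem follow ``same as [Chierichetti] except that we use $c_{p,k}^p$ instead of $(k+1)$.'' The structural points about recursion depth, number of selected columns, and running time are also fine. However, the coverage definition you state is not the one the paper uses, it is not provable, and your aggregation step relies on it.

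You define coverage per-column: $A_j$ is $\lambda_p$-covered by $R$ if $\min_x \|A_j - A_R x\|_p \le \lambda_p \|\Delta_j\|_p$. The paper instead defines coverage against the global average: $\min_x \|A_S x - A_i\|_p^p \le \lambda \cdot 100\, c_{p,k}^p \|\Delta\|_p^p / n$. These differ in a crucial way, and the per-column version cannot be delivered by a uniformly random $2k$-subset. If a column $j$ has $\|\Delta_j\|_p = 0$, your definition demands that $A_j$ lie exactly in the span of $A_R$; a random $2k$-subset will not achieve this in general. For a concrete instance with $k=1$, take $A_m = e_1$ and $A_j = e_1 + \varepsilon\, e_{j+1}$ for $j < m$; then $\Delta_m = 0$, yet $A_m$ is never in the span of any $R \subseteq [m-1]$. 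The Markov/averaging argument in Chierichetti's Lemma~6 (hence Lemma~\ref{lem:cover}) bounds errors only against $\|\Delta\|_p^p / n$, which is exactly what the global coverage definition encodes.

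Because the sampling lemma as you state it is false, your aggregation $\sum_{i}\sum_{j\in C_i} \lambda_p^p \|\Delta_j\|_p^p \le \lambda_p^p\,\OPT^p$ does not apply. With the correct global definition each covered column is charged the \emph{same} quantity $O(c_{p,k}^p \|\Delta\|_p^p / n)$, and the fact that the sets $C_i$ partition the columns is still what avoids a $\log m$ blow-up, but the accounting is done against this uniform per-column charge, not against $\|\Delta_j\|_p^p$. You rightly flagged the sampling lemma as the place needing care; the gap is that the coverage definition you chose there is already wrong, not merely that the $(k+1) \to c_{p,k}$ substitution needs tracking.
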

\begin{algorithm}[t]
	\caption{\citep{chierichetti2017algorithms}\label{alg:red}An algorithm that transforms an $O(k\log m)$-rank matrix factorization into a $k$-rank matrix factorization without inflating the error too much.}
	\begin{algorithmic}[1]
		\STATE {\bfseries Input:} $U \in \R^{n\times O(k\log m)}$, $V \in \R^{O(k\log m)\times m}$
		\STATE {\bfseries Output:} $W \in \R^{n\times k}$, $Z \in \R^{k\times m}$
		\STATE Apply Lemma~\ref{lem:iso} to $U$ to obtain matrix $W^0$
		\STATE Run $\ell_p$ linear regression over
		$Z^0$, s.t. $\|W^0Z^0-UV\|_p$  is minimized
		\STATE Apply Algorithm~\ref{algorithm: subset selection} with input $(Z^0)^T \in
		\R^{n\times O(k\log m)}$ and $k$ to obtain $X$ and $Y$
		\STATE Set $Z \leftarrow X^T$
		\STATE Set $W \leftarrow W^0 Y^T$
		\STATE Output $W$ and $Z$
	\end{algorithmic}
\end{algorithm}
\begin{theorem}
\label{theorem: algorithm 3}
  Algorithm \ref{alg:red}, which runs in $\poly(m,n)$ time as long as $k = O(\frac{\log n}{\log \log n})$ is an $O(c_{p,k}^3 k \log m) = O(k^{\max(1+3/p, 4 - 3/p)} \log m)$ approximation algorithm.
\end{theorem}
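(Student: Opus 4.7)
The plan is to chain three approximation guarantees: the bi-criteria bound of Theorem~\ref{theorem: algorithm 2} on the rank-$O(k\log m)$ factorization $UV$ produced by \SelectColumns; the $\ell_p$-well-conditioned basis property of $W^0$ supplied by Lemma~\ref{lem:iso}; and a second application of the exact-rank CSS bound from Theorem~\ref{theorem: main result}, this time used to compress a rank-$O(k\log m)$ factorization down to rank $k$. A key structural observation is that $\mathrm{col}(W^0) = \mathrm{col}(U)$, so the $\ell_p$ regression in line~4 of Algorithm~\ref{alg:red} attains zero error, i.e.\ $W^0 Z^0 = UV$. This identity is what lets error bounds move freely between the original $n$-dimensional space and the $O(k\log m)$-dimensional coordinate space of $Z^0$.

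For the running time, \SelectColumns, the construction of $W^0$ via Lemma~\ref{lem:iso}, and the $\ell_p$ regression for $Z^0$ are all polynomial. The only potentially expensive step is the call to Algorithm~\ref{algorithm: subset selection} on $(Z^0)^T \in \R^{m\times O(k\log m)}$ at rank $k$, which enumerates $\binom{O(k\log m)}{k}$ column subsets and solves one $\ell_p$ regression for each, for a total cost of $(k\log m)^{O(k)}\poly(m,n)$. The hypothesis $k = O(\log n/\log\log n)$ makes $(k\log m)^k = \poly(n)$, so the overall runtime is $\poly(m,n)$.

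For the approximation ratio I would unfold the error in four moves. First, Theorem~\ref{theorem: algorithm 2} gives $\|UV - A\|_p \le O(c_{p,k})\OPT$, and a triangle inequality against the true rank-$k$ optimum yields $\OPT_k(UV) \le O(c_{p,k})\OPT$. Second, applying Theorem~\ref{theorem: main result} to $UV$ itself produces a rank-$k$ approximation $W^0 R$ whose column space sits inside $\mathrm{col}(U)$ and satisfies $\|W^0 R - UV\|_p \le c_{p,k}\cdot\OPT_k(UV) = O(c_{p,k}^2)\OPT$; using $UV = W^0 Z^0$ this rewrites as $\|W^0(R - Z^0)\|_p = O(c_{p,k}^2)\OPT$. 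Third, one direction of Lemma~\ref{lem:iso} converts this to a coordinate-space bound $\|R - Z^0\|_p \le \kappa_1\cdot O(c_{p,k}^2)\OPT$ on a rank-$k$ witness, certifying $\OPT_k((Z^0)^T) \le \kappa_1\cdot O(c_{p,k}^2)\OPT$. Fourth, Theorem~\ref{theorem: main result} applied to $(Z^0)^T$ gives $\|XY - (Z^0)^T\|_p \le c_{p,k}\cdot\OPT_k((Z^0)^T)$; pushing this forward through $W^0$ using the reverse direction of Lemma~\ref{lem:iso}, and then a final triangle inequality to pass from $UV$ back to $A$, yields $\|WZ - A\|_p = O(\kappa_1\kappa_2\cdot c_{p,k}^3)\OPT$.

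The main obstacle is controlling the product $\kappa_1\kappa_2$ of the two distortion constants coming from Lemma~\ref{lem:iso}, since unlike the $\ell_2$ case an $\ell_p$-well-conditioned basis is not a two-sided isometry, so the upper distortion used to push $W^0$ forward and the lower distortion used to pull back along $W^0$ must be tracked separately. With the standard $\ell_p$-well-conditioned basis construction plugged in on the $O(k\log m)$-dimensional subspace $\mathrm{col}(U)$, the product $\kappa_1\kappa_2$ comes out to $O(k\log m)$, which combines with the three CSS / bi-criteria factors of $c_{p,k}$ to give the claimed $O(c_{p,k}^3\cdot k\log m)$ ratio.
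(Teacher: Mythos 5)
Your proposal is correct and follows essentially the same chain of arguments as the paper: you use the bi-criteria bound for $\|A-UV\|_p$, the identity $UV=W^0Z^0$, a CSS application to $UV$ to certify $\OPT_k((Z^0)^T)=O(c_{p,k}^2 k\log m)\OPT$ via the lower isoperimetry of $W^0$, a second CSS application to $(Z^0)^T$, the upper isoperimetry to push back to $n$ dimensions, and a final triangle inequality. The paper packages the middle two steps as its Lemmas~\ref{lem:one} and the unnamed lemma following it, but the decomposition and the accounting of the factors $c_{p,k}^3$ (one from the bi-criteria step, two from the two CSS applications) and $O(k\log m)$ (from the one-sided distortion of $W^0$, since $\kappa_2=1$) are identical to yours.
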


These results improve the previous $O(k)$ and $O(k^4 \log(m))$ bounds respectively. We include the analysis of Algorithm \ref{alg:randomcolumns} and Algorithm \ref{alg:red} in Appendix for completeness.

\subsubsection*{Acknowledgments}
C.D. and P.R. acknowledge the support of Rakuten Inc., and NSF via IIS1909816. The authors would also like to acknowledge two MathOverflow users, known to us only by their usernames, 'fedja' and 'Mahdi', for informing us the Riesz-Thorin interpolation theorem. 

\bibliographystyle{unsrtnat}
\nocite{*}

\bibliography{ref.bib}
\newpage
\appendix

\section{Riesz-Thorin Interpolation Theorem}
\begin{lemma}[Riesz-Thorin interpolation theorem , see Lemma 8.5 in \cite{mashreghi2009representation}]

Let $(X_i , \mathfrak M_i , \mu_i )$, $i = 0,1, 2, \cdots , n$ be measure spaces.
Let $V_i$ represent the complex vector space of simple functions on $X_i$. Suppose that
$$\Lambda : V_1\times V_2 \times \cdots \times V_n \to V_0.$$
is a multi-linear operator of types $p_0$ and $p_1$ where $p_0,p_1 \in [1,\infty]$, with constants $M_0$ and $M_1$, respectively. i.e., 
$$
\|\Lambda(f_1 , f_2 , \cdots , f_n) \|_{p_i} \leq M_{p_i} 
\|f_1\|_{p_i}
\|f_2 \|_{p_i}
\cdots \|f_n\|_{p_i} .
$$
for $i=0,1$. Let $\theta \in [0, 1]$ and define
$$\frac 1{p_\theta} := \frac{1-\theta}{p_0}+\frac \theta{p_1}.$$
Then, $\Lambda$ is of type $p_\theta$ with constant $M_{p_\theta} := M_{p_0}^{1-\theta}M_{p_1}^{\theta}$, that is,
$$
\|\Lambda(f_1 , f_2 , \cdots , f_n) \|_{p_\theta} \leq M_{p_\theta}
\|f_1\|_{p_\theta}
\|f_2 \|_{p_\theta}
\cdots \|f_n\|_{p_\theta}. 
$$
\end{lemma}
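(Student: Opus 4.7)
The plan is to prove the Riesz--Thorin interpolation theorem by the classical complex-analytic route, reducing the multilinear $L^{p_\theta}$ bound to a holomorphic function on a vertical strip and invoking Hadamard's three-lines lemma. First, by duality it suffices to show
\[ \left| \int_{X_0} \Lambda(f_1, \ldots, f_n) \, g \, d\mu_0 \right| \le M_{p_0}^{1-\theta} M_{p_1}^\theta \, \|f_1\|_{p_\theta} \cdots \|f_n\|_{p_\theta} \, \|g\|_{p_\theta'} \]
for every simple $g \in V_0$, where $p_\theta'$ is the H\"older conjugate of $p_\theta$. After rescaling so that $\|f_i\|_{p_\theta} = \|g\|_{p_\theta'} = 1$, the task reduces to bounding a single scalar quantity by $M_{p_0}^{1-\theta} M_{p_1}^\theta$.

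Next I would construct analytic interpolants of the $f_i$ and $g$. Writing $f_i = |f_i| e^{\mathrm{i} \arg f_i}$ and $g = |g| e^{\mathrm{i} \arg g}$, define for $z$ in the closed strip $0 \le \mathrm{Re}(z) \le 1$
\[ F_i(z) = |f_i|^{\alpha(z)} e^{\mathrm{i} \arg f_i}, \qquad G(z) = |g|^{\beta(z)} e^{\mathrm{i} \arg g}, \]
with exponents $\alpha(z) = p_\theta\bigl(\tfrac{1-z}{p_0} + \tfrac{z}{p_1}\bigr)$ and $\beta(z) = p_\theta'\bigl(\tfrac{1-z}{p_0'} + \tfrac{z}{p_1'}\bigr)$. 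These choices give $F_i(\theta) = f_i$ and $G(\theta) = g$, while on the boundary lines $\|F_i(\mathrm{i} t)\|_{p_0} = \|G(\mathrm{i} t)\|_{p_0'} = 1$ and $\|F_i(1+\mathrm{i} t)\|_{p_1} = \|G(1+\mathrm{i} t)\|_{p_1'} = 1$. Form the scalar
\[ \Phi(z) := \int_{X_0} \Lambda(F_1(z), \ldots, F_n(z)) \, G(z) \, d\mu_0; \]
because the $f_i$ and $g$ are simple, $\Phi$ is a finite sum of terms of the form $c_j e^{\lambda_j z}$, hence entire and bounded on the closed strip.

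The third step is to estimate $\Phi$ on the two boundary lines. On $\mathrm{Re}(z) = 0$, the hypothesis yields $\|\Lambda(F_1(\mathrm{i} t), \ldots, F_n(\mathrm{i} t))\|_{p_0} \le M_{p_0}$, and combined with H\"older's inequality in the duality $(p_0, p_0')$ this gives $|\Phi(\mathrm{i} t)| \le M_{p_0}$; the analogous computation produces $|\Phi(1 + \mathrm{i} t)| \le M_{p_1}$. Hadamard's three-lines lemma then yields $|\Phi(\theta)| \le M_{p_0}^{1-\theta} M_{p_1}^\theta$, which, since by construction $\Phi(\theta) = \int \Lambda(f_1, \ldots, f_n)\, g \, d\mu_0$, is exactly the required inequality; taking the supremum over $g$ finishes the proof.

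The main obstacle I expect is the careful bookkeeping of the exponents $\alpha$ and $\beta$ --- they must simultaneously restore $f_i, g$ at $z = \theta$, normalize the boundary $p_0$- and $p_1$-norms to one, and keep $\Phi$ analytic --- together with the endpoint cases $p_0 = \infty$ or $p_1 = \infty$, where the powers $|f|^{\alpha(z)}$ degenerate and must be replaced by the limiting object $e^{\mathrm{i} \arg f} \cdot \mathbf{1}_{\{|f| > 0\}}$. The restriction to simple functions in the hypothesis is precisely what makes all of this elementary rather than analytically delicate.
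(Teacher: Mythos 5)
The paper does not prove this lemma; it quotes it verbatim as a known result (Lemma 8.5 in Mashreghi's \emph{Representation Theorems in Hardy Spaces}) and only verifies the special cases $p=1,2,\infty$ needed for its application in Lemma~\ref{Lemma_technical}. So there is no in-paper proof to compare against.

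Your sketch is the classical complex-interpolation proof via Hadamard's three-lines lemma, and it is correct in outline. The exponent bookkeeping checks out: with $\alpha(z) = p_\theta\bigl(\tfrac{1-z}{p_0}+\tfrac{z}{p_1}\bigr)$ one has $\alpha(\theta)=1$, $\mathrm{Re}\,\alpha(\mathrm{i}t)=p_\theta/p_0$, $\mathrm{Re}\,\alpha(1+\mathrm{i}t)=p_\theta/p_1$, so the normalizations $\|F_i(\mathrm{i}t)\|_{p_0}=\|F_i(1+\mathrm{i}t)\|_{p_1}=1$ follow from $\|f_i\|_{p_\theta}=1$, and the dual identity $\tfrac{1}{p_\theta'}=\tfrac{1-\theta}{p_0'}+\tfrac{\theta}{p_1'}$ makes $\beta(\theta)=1$. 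Two points you would need to make explicit in a full write-up: (i) why $\Phi$ is entire and bounded on the closed strip --- since each $f_i$ and $g$ is simple, each $F_i(z)$ and $G(z)$ is a finite linear combination of indicators times $e^{a z + b}$ with $a\in\R$, and multilinearity of $\Lambda$ propagates this to $\Phi(z)=\sum_\ell A_\ell e^{\mu_\ell z}$ with $\mu_\ell\in\R$, which is entire and bounded for $0\le\mathrm{Re}(z)\le1$; you gesture at this but the multilinearity step deserves a sentence. (ii) The degenerate cases: if $p_0=p_1$ the exponents are constant and there is nothing to prove, and if one of $p_0,p_1$ is $\infty$ the corresponding term $1/p_i$ vanishes and $F_i(z)$ becomes $e^{\mathrm{i}\arg f_i}\mathbf{1}_{\{f_i\neq 0\}}$ times a unimodular factor, as you note. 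Also, the density of simple functions is \emph{hypothesized} in this statement (the $V_i$ are defined to consist of simple functions), so no closure argument is needed, which indeed removes most of the analytic delicacy.

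Overall: a correct plan for a standard theorem that the paper itself relegates to a citation.
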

Lemma \ref{lem:simplified_riesz_thorin} is a direct corollary of this theorem. 
\section{Lower Bounds}
In this section, we will prove Theorem \ref{theorem:lower bound}. The proof is constructive: we prove the theorem by showing for all $\eps>0$, we can construct a matrix $A(\eps)$, such that selecting every $k$ columns of $A(\eps)$ leads to an approximation ratio at least $\frac{(k+1)^{1-\frac{1}{p}}}{\left(1+ k \eps^q\right)^{1/q} } $. Then, the theorem follows by letting $\eps \rightarrow 0^+$. Our choice of $A(\eps)$ is a perturbation of Hadamard matrices, defined below.

Throughout the proof of Theorem \ref{theorem:lower bound}, we assume that $k = 2^r-1$, for some $r \in \mathbb{Z}^+$, and $\eps>0$ is an arbitrarily small constant.
\begin{proof} of Theorem \ref{theorem:lower bound}:
 We consider the well known Hadamard matrix of order $(k+1)=2^r$, defined below:
\[H^{(1)} = 1,\]
\[H^{(2^l)} = \left(\begin{matrix}
 H^{(2^{l-1})} & H^{(2^{l-1})} \\
H^{(2^{l-1})} & -H^{(2^{l-1})} 
\end{matrix}\right), l\geq 1.\]
The Hadamard matrix has the following properties: (we will use $H$ to represent $H^{(2^r)}$ when it's clear from context)
\begin{itemize}
\item $H_{di} = 1$ or $H_{di} = -1$. 
\item All entries on the first row are ones, i.e. $H_{1j} = 1.$
\item The columns of $H$ are pairwisely orthogonal, i.e. 
\[\sum_{d=1}^{k+1}H_{di}H_{dj} = 0\]
holds when $i\neq j.$
\end{itemize}

Now we can define $A(\eps)$: it is a perturbation of $H$ by replacing all the entries on the first row by $\eps$, i.e.,
\begin{equation}
A(\eps)_{ij}  = \begin{cases}
\eps {\quad \rm~~ when \quad} i=1, \\
H_{ij} { \rm  ~~when \quad} i\neq 1.
\end{cases}
\end{equation}
We can see that $A(\eps)$ is close to a rank-$k$ matrix. In fact, $A(0)$ has rank at most $k$. Also, $A(0)$ is an $2^r \times 2^r$, or equivalently, $(k+1)\times (k+1)$ matrix, and it has all zeros on the first row. Therefore, we can upper bound $\OPT$ by
\[
\OPT \leq \|A(\eps)  - A(0)\|_p = \left( (k+1)\eps^p \right)^{1/p} = (k+1)^{1/p} \eps.
\]
The remaining work is to give a lower bound on the approximation error using any $k$ columns. For simplicity of notations, we use $A$ as shorthand for $A(\eps)$ when it's clear from context. Say we are using all $(k+1)$ columns except the $j$-th, i.e. the column subset is $A_{[k+1]- \{j\}}$. Obviously, we achieve zero error on all the columns other than the $j$-th. Therefore, the approximation error is essentially the $\ell_p$ distance from $A_j$ to $\myspan { A_{[k+1]- \{j\}}}$. Thus, 
\begin{align*}
\Err(A_{[k+1]- \{j\}})  &= \inf_{x_i \in \R}\|A_j - \sum_{i \neq j} x_i A_i\|_p \\
&= \inf_{x_i \in \R}\left(\sum_{d = 1}^{k+1} \left|A_{dj} - \sum_{i \neq j} x_i A_{di} \right|^p \right)^{1/p} \\
&= \inf_{x_i \in \R}\left(\eps^p\left|1 - \sum_{i \neq j} x_i\right|^p + \sum_{d = 2}^{k+1} \left|H_{dj} - \sum_{i \neq j} x_i H_{di} \right|^p \right)^{1/p}. 
\end{align*}
By H\"{o}lder's inequality,
\begin{align*}
& \quad \left(\eps^p\left|1 - \sum_{i \neq j} x_i\right|^p + \sum_{d = 2}^{k+1} \left|H_{dj} - \sum_{i \neq j} x_i H_{di} \right|^p \right)^{1/p} \left(\left(\frac{1}{\eps}\right)^q + \sum_{d=2}^{k+1} |H_{dj}|^q\right)^{1/q} \\
& \geq \left(1 - \sum_{i \neq j} x_i\right) + \sum_{d = 2}^{k+1} H_{dj}\left(H_{dj} - \sum_{i \neq j} x_i H_{di} \right)
\end{align*}
where $\frac{1}{p} + \frac{1}{q} = 1$. 

We can actually show that $RHS = k+1$. 

Using the fact that $H_{1i}=H_{1j}=1$ and $ \sum_{d = 1}^{k+1}H_{di} H_{dj} =0$,

\begin{align*}
RHS &= \left(1 - \sum_{i \neq j} x_i\right) + \sum_{d = 2}^{k+1} H_{dj}\left(H_{dj} - \sum_{i \neq j} x_i H_{di} \right) \\
&= \left(1 - \sum_{i \neq j} H_{1i}H_{1j} x_i\right) + \sum_{d = 2}^{k+1} \left(1 - \sum_{i \neq j} x_i H_{di}H_{dj} \right) \\
&= \sum_{d = 1}^{k+1} \left(1 - \sum_{i \neq j} x_i H_{di}H_{dj} \right) \\
&= (k+1) - \sum_{i \neq j} x_i \left(\sum_{d = 1}^{k+1}H_{di} H_{dj}\right)\\
&= k+1.
\end{align*}
Now we can finally bound the approximation error
\begin{align*}
\Err(A_{[k+1]- \{j\}}) &= \inf_{x_i \in \R}\left(\eps^p\left|1 - \sum_{i \neq j} x_i\right|^p + \sum_{d = 2}^{k+1} \left|H_{dj} - \sum_{i \neq j} x_i H_{di} \right|^p \right)^{1/p} \\
& \geq \frac{k+1}{\left(\left(\frac{1}{\eps}\right)^q + \sum_{d=2}^{k+1} |H_{dj}|^q\right)^{1/q} } \\
& = \frac{k+1}{\left(\eps^{-q}+ k\right)^{1/q} } \\
&= \frac{(k+1)\eps}{\left(1+ k \eps^q\right)^{1/q} }. 
\end{align*}
Thus, 
\[\frac{\Err(A_{[k+1]- \{j\}})}{\OPT} \geq \frac{(k+1)^{1-\frac{1}{p}}}{\left(1+ k \eps^q\right)^{1/q} } . \]
Note that this bound can be arbitrarily close to $(k+1)^{1-\frac{1}{p}}$ when $\eps$ is small enough, thus we complete the proof.
\end{proof}

\section{Proof of Equation \eqref{p_norm_ineq}}

Now we are going to prove \eqref{p_norm_ineq}. First, we need to extend the definition of $b_J$ for all $J=(j_1, \cdots, j_k) \in [m]^k$. This definition is similar to the property of determinants.
\begin{itemize}
\item When $1\leq j_1 <j_2 < \cdots < j_k \leq m$, i.e. $J \in \binom{[m]}{k}$, $b_J$ is already defined.

\item When there exists $s \neq t, j_s = j_t$, define $b_J = 0$.

\item Otherwise, there exists $1\leq j_1' <j_2' < \cdots < j_k' \leq m$ and a permutation $\pi$, such that 
\[(j_1, \cdots, j_k) = \pi(j_1' ,j_2',\cdots , j_k').\]
Let $J' =(j_1' ,j_2',\cdots , j_k') $. In such case, we define 
\[b_J = \sgn(\pi)b_{J'}, \]
where $\sgn(\pi)$ is the parity of $\pi$, i.e. $\sgn(\pi)=1$ if $\pi$ is an even permutation, and $\sgn(\pi)=-1$ otherwise.
\end{itemize}

Note that if $J$ is a transposition (2-element exchanges) of $\tilde{J}$ , then $b_J = -b_{\tilde{J}}$. 

We can also define $[\Lambda(a, b)]_{I} $ for all $I \in [m]^{k+1}$, by 
\[ [\Lambda(a, b)]_{I} = \sum_{t=1}^{k+1} (-1)^{t+1}a_{i_t} b_{I_{-t}}.\]
Here, $I_{-t} = (i_1, \cdots,i_{t-1},i_{t+1}, \cdots, i_{k+1}) \in[m]^{k}$.  Similarly, if $I$ is a transposition (2-element exchanges) of $\tilde{I}$ , then $[\Lambda(a, b)]_{I} = -[\Lambda(a, b)]_{\tilde{I}}   $.  

As mentioned before, we only need to verify \eqref{p_norm_ineq} for the special cases $p = 1, 2 ,\infty$. In the proof below, we will use either ordered subsets (e.g. $I \in [m]^{k}$) or unordered subsets (e.g. $I \in \binom{[m]}{k}$), whichever is more convenient.

\textbf{Case 1: $p=1$}. The inequality is equivalent to 

\[\|\Lambda(a, b)\|_1 \leq \|a\|_1 \|b\|_1. \]
In fact, by the definition, we always have 
\begin{equation*}
\|\Lambda(a, b)\|_1 = \sum_{I \in \binom{[m]}{k+1}} \left|[\Lambda(a, b)]_{I}\right| = \frac{1}{(k+1)!} \sum_{I \in [m]^{k+1}} \left|[\Lambda(a, b)]_{I}\right|. 
\end{equation*}
Therefore, 
\begin{align*}
\|\Lambda(a, b)\|_1 
&= \frac{1}{(k+1)!} \sum_{I \in [m]^{k+1}} \left|\sum_{t=1}^{k+1} (-1)^{t+1}a_{i_t} b_{I_{-t}}\right| \\
&\leq  \frac{1}{(k+1)!} \sum_{I \in [m]^{k+1}} \sum_{t=1}^{k+1}| a_{i_t} | | b_{I_{-t}}| \\
& = \frac{1}{(k+1)!}  (k+1) \sum_{I \in [m]^{k+1}} | a_{i_1} | | b_{I_{-1}}|\\
&= \frac{1}{k!} \sum_{i_1 \in [m]} | a_{i_1} | \sum_{J \in [m]^{k}} | b_{J}|\\
&= \sum_{i_1 \in [m]} | a_{i_1} | \sum_{J \in \binom{[m]}{k}} | b_{J}| \\
&= \|a\|_1 \|b\|_1.
\end{align*}

\textbf{Case 2: $p=\infty$}. The inequality is equivalent to 

\[\|\Lambda(a, b)\|_\infty \leq (k+1)\|a\|_\infty \|b\|_\infty\]

\begin{align*}
\|\Lambda(a, b)\|_\infty &= \max_{I \in \binom{[m]}{k+1}} \left|[\Lambda(a, b)]_{I}\right| \\
&= \max_{I \in \binom{[m]}{k+1}} \left|\sum_{t=1}^{k+1} (-1)^{t+1}a_{i_t} b_{I_{-t}}\right| \\
&\leq \max_{I \in \binom{[m]}{k+1}} \sum_{t=1}^{k+1}| a_{i_t} | | b_{I_{-t}}| \\
&\leq \sum_{t=1}^{k+1} \max_{i_t \in [m]} | a_{i_t} | \max_{J \in {\binom{[m]}{k} } } | b_{J}|\\
&= (k+1)\max_{i_1 \in [m]} | a_{i_1} | \max_{J \in {\binom{[m]}{k} } } | b_{J}| \\
&= (k+1)\|a\|_\infty \|b\|_\infty.
\end{align*}

\textbf{Case 3: $p=2$}. The inequality is equivalent to 

\[\|\Lambda(a, b)\|_2 \leq \|a\|_2\|b\|_2.\]

\begin{align*}
\|\Lambda(a, b)\|_2^2 &= \sum_{I \in \binom{[m]}{k+1}} \left|[\Lambda(a, b)]_{I}\right|^2 \\
&= \frac{1}{(k+1)!} \sum_{I \in [m]^{k+1}} \left|[\Lambda(a, b)]_{I}\right|^2  \\
&= \frac{1}{(k+1)!} \sum_{I \in [m]^{k+1}} \left|\sum_{t=1}^{k+1} (-1)^{t+1}a_{i_t} b_{I_{-t}}\right|^2. 
\end{align*}

Note that
\begin{align*}
\quad\left|\sum_{t=1}^{k+1} (-1)^{t+1}a_{i_t} b_{I_{-t}}\right|^2 
&= \left(\sum_{t=1}^{k+1} (-1)^{t+1}a_{i_t} b_{I_{-t}}\right)\left(\sum_{s=1}^{k+1} (-1)^{s+1}\bar{a}_{i_s} \bar{b}_{I_{-s}}\right) \\
&= \sum_{t=1}^{k+1} |a_{i_t}|^2 |b_{I_{-t}}|^2 +  \sum_{1\leq t \neq s \leq k+1}(-1)^{t+s}a_{i_t} b_{I_{-t}}\bar{a}_{i_s} \bar{b}_{I_{-s}}.
\end{align*}

Therefore,
\[(k+1)!\|\Lambda(a, b)\|_2^2 =   \sum_{I \in [m]^{k+1}} \sum_{t=1}^{k+1} |a_{i_t}|^2 |b_{I_{-t}}|^2 +  \sum_{I \in [m]^{k+1}} \sum_{1\leq t \neq s \leq k+1}(-1)^{t+s}a_{i_t} b_{I_{-t}}\bar{a}_{i_s} \bar{b}_{I_{-s}}.\]
The first term can be simplified as
\begin{align*}
 & \quad \sum_{I \in [m]^{k+1}} \sum_{t=1}^{k+1} |a_{i_t}|^2 |b_{I_{-t}}|^2 \\
 &= (k+1) \sum_{i_1 \in [m]} |a_{i_1}|^2 \sum_{J \in [m]^k}|b_{J}|^2 \\
 &= (k+1)! \|a\|_2^2 \|b\|_2^2. 
\end{align*}

Therefore, we only need to prove that the second term is non-positive.

When $t<s$ ,
\begin{align*}
    b_{I _{-s} } 
& = b_{(i_1, \cdots, i_{s-1}, i_{s+1}, \cdots, i_{k+1})} \\
&= (-1)^{t-1}b_{(i_l, i_1, \cdots, i_{t-1}, i_{t+1}, \cdots, i_{s-1}, i_{s+1}, \cdots, i_{k+1})} \\
&= (-1)^{t-1}b_{(i_t, I_{-\{t, s\}})},
\end{align*}

and 
\begin{align*}
b_{I_{-t}} 
&= b_{(i_1, \cdots, i_{t-1}, i_{t+1}, \cdots, i_{k+1})} \\
&= (-1)^{s-2}b_{(i_s, i_1, \cdots, i_{t-1}, i_{t+1}, \cdots, i_{s-1}, i_{s+1}, \cdots, i_{k+1})}\\
&= (-1)^{s-2}b_{(i_s, I_{-\{t, s\}})}.
\end{align*}
Therefore, 
\[(-1)^{t+s} b_{I_{-t}}  \bar{b}_{I _{-s} } = - b_{(i_s, I_{-\{t, s\}})}\bar{b}_{(i_t, I_{-\{t, s\}})}. \]
The same argument holds for the case $t>s$. Thus, for each pair of $(t,s)$, we have
\begin{align*}
& \quad \sum_{I \in [m]^{k+1}} (-1)^{t+s}a_{i_t} b_{I_{-t}} \bar{a}_{i_s} \bar{b}_{I _{-s} }   \\
&= - \sum_{I \in [m]^{k+1}}a_{i_t}\bar{a}_{i_s}b_{(i_s, I_{-\{t, s\}})}\bar{b}_{(i_t, I_{-\{t, s\}})} \\
&= - \sum_{J \in [m]^{k-1}} \sum_{i_t = 1}^{m} \sum_{i_s = 1}^{m}  a_{i_t}\bar{a}_{i_s}b_{(i_s, J)}\bar{b}_{(i_t, J)} \\
&= - \sum_{J \in [m]^{k-1}} \left(\sum_{i_t = 1}^{m}a_{i_t}\bar{b}_{(i_t, J)}\right) \left(\sum_{i_s = 1}^{m}\bar{a}_{i_s}b_{(i_s, J)}\right) \\
&= - \sum_{J \in [m]^{k-1}} \left|\sum_{i_t = 1}^{m}a_{i_t}\bar{b}_{(i_t, J)}\right|^2.
\end{align*}
Thus, the second term can be simplified as

\begin{align*}
 &  \quad \sum_{I \in [m]^{k+1}} \sum_{1\leq t \neq s \leq k+1}(-1)^{t+s}a_{i_t} b_{I_{-t}}\bar{a}_{i_s} \bar{b}_{I_{-s}}\\
 &= \sum_{1\leq t \neq s \leq k+1}\sum_{I \in [m]^{k+1}} (-1)^{t+s} a_{i_t} b_{I_{-t}}\bar{a}_{i_s} \bar{b}_{I_{-s}} \\
 &= -2k(k+1)  \sum_{J \in [m]^{k-1}} \left|\sum_{i_t = 1}^{m}a_{i_t}\bar{b}_{(i_l, J)}\right|^2 \leq 0.
\end{align*}
\qed
\section{Analysis for A \texorpdfstring{$\poly(nm)$}{poly(nm)}-Time Bi-Criteria Algorithm}
\label{sec:alg2}

We can prove that Algorithm \ref{alg:randomcolumns} from \cite{chierichetti2017algorithms} runs in time $\poly(nm)$ but returns
$O(k \log m)$ columns of $A$ that can be used in place of $U$, with an
error $O(c_{p,k})$ times the error of the best $k$-factorization.  In other
words, it obtains more than $k$ columns but achieves a polynomial
running time. The analysis can derived by slightly modifying the definition and proof in \citep{chierichetti2017algorithms}.

\begin{definition}[Approximate coverage]
  Let $S$ be a subset of $k$ column indices. We say that column $A_i$ is
  {\bf $\lambda_p$-approximately covered} by $S$ if for $p \in [1, \infty)$ we
  have $\min_{x\in  \R^{k \times 1}}\|A_Sx-A_i\|_p^p \leq \lambda
  \frac{100c_{p,k}^p \|\Delta\|_p^p}{n}$, and for $p = \infty$, $\min_{x\in  \R^{k \times 1}}
  \|A_Sx-A_i\|_{\infty} \leq \lambda(k+1)\|\Delta\|_{\infty}$.  If $\lambda = 1$, we
  say $A_i$ is {\bf covered} by $S$.
\end{definition}
We first show that if we select a set $R$ columns of size $2k$ uniformly at
random in $\binom{[m]}{2k}$, with constant probability we cover a
constant fraction of columns of $A$.
\newcommand{\CE}{\mathcal{E}}
\begin{lemma}
\label{lem:cover}
Suppose $R$ is a set of $2k$ uniformly random chosen columns of
$A$. With probability at least $2/9$, $R$ covers at least a $1/10$
fraction of columns of $A$.
\end{lemma}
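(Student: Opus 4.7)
The plan is to combine an expectation bound on the random $2k$-subset's reconstruction error with two applications of Markov's inequality. The key estimate I want to establish is that, for $R$ drawn uniformly from $\binom{[m]}{2k}$,
\[
E_R\Bigl[\sum_{i=1}^m \min_{x\in\R^{2k}} \|A_R x - A_i\|_p^p\Bigr] \;\le\; \kappa \cdot c_{p,k}^p \cdot \|\Delta\|_p^p
\]
for an absolute constant $\kappa$. The column-additive form of the left-hand side is what bridges expected-error control and per-column coverage.

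To prove the expectation bound I would adapt the weighted-average / determinantal machinery from the proof of Theorem~\ref{theorem: main result}. When $V_R$ has full row rank $k$ (which holds for typical $R$), the natural generalization of Lemma~\ref{Lemma_err_Aj} to an overdetermined basis gives $\min_x \|A_R x - A_i\|_p^p \le \|\Delta_R V_R^{+} V_i - \Delta_i\|_p^p$, where $V_R^{+}$ is any right inverse of $V_R$. A Schur-complement expansion of the same flavor as~\eqref{eq1} then rewrites the right-hand side as a sum of absolute determinants, and after uniform averaging over $R$ it falls into exactly the shape to which Lemma~\ref{Lemma_technical} (the Riesz-Thorin-based determinantal inequality) applies, modulo a combinatorial factor that accounts for the difference between uniform weighting and the $|\det(V_J)|^p$-weighting that made Theorem~\ref{theorem: main result} tight.

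Given this expectation bound, a first Markov's inequality on $\sum_i \min_x\|A_R x - A_i\|_p^p$ places it below $O(c_{p,k}^p \|\Delta\|_p^p)$ with probability at least $2/3 \ge 2/9$, delivering the probability asserted by the lemma. On this good event, a second Markov's inequality over the column index $i \in [m]$ shows that only a small constant fraction of the $m$ columns can fail the per-column threshold in the definition of $\lambda_p$-approximate coverage (with $\lambda = 1$), so at least a $1/10$ fraction are covered. The main obstacle is the first step: Theorem~\ref{theorem: main result} only guarantees a single good $k$-subset $S^{*}$ somewhere in $[m]$, and naively conditioning on $R \supseteq S^{*}$ fails since $\Pr[R \supseteq S^{*}]$ is of order $(k/m)^k$. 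The workaround is to rerun the determinantal / interpolation argument directly at the level of $2k$-subsets, leveraging that $|R| = 2k$ typically makes $V_R$ rank-$k$ and reasonably well conditioned and that every $R$ already contains many candidate $k$-bases, so that the uniform average over $R$ acts as a proxy for the Theorem~\ref{theorem: main result}-style weighted average at the cost of only a constant factor absorbed into $\kappa$.
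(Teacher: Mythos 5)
The central difficulty is that the expectation bound you set up in the first step,
\[
E_R\Bigl[\sum_{i=1}^m \min_{x}\|A_R x - A_i\|_p^p\Bigr] \le \kappa\, c_{p,k}^p \|\Delta\|_p^p,
\]
is false. Take $k=1$, $m>2$, and let $A$ have a single nonzero column $A_1=v$ with all other columns zero. Then $A$ has rank one, $\Delta=0$, and the right-hand side vanishes; but whenever $1\notin R$ (probability $(m-2)/m$) we have $A_R=0$, so $\min_x\|A_R x - A_1\|_p^p = \|v\|_p^p>0$ and the left-hand side is strictly positive. More broadly, the proof of Theorem~\ref{theorem: main result} hinges on the non-uniform weights $w_J \propto |\det(V_J)|^p$, which in instances like this one concentrate all their mass on the few $J$ that hit the dominant column; a uniform average over random $R$ is not a constant-factor proxy for that determinant-weighted average, so the ``workaround'' in your last paragraph is exactly the step that fails, and the first Markov inequality is built on a false premise.

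The paper's proof (deferred to Lemma 6 of Chierichetti et al., with $(k+1)$ replaced by $c_{p,k}^p$) avoids ever controlling the unconditional expected total error of a random $R$. It couples $R$ to a uniform random $(2k+1)$-subset $T$ together with a uniform $i\in T$, setting $R=T\setminus\{i\}$, then applies Theorem~\ref{theorem: main result} to the $n\times(2k+1)$ submatrix $A_T$ to produce a $k$-subset $S^*(T)\subseteq T$ with $\sum_{j\in T}\min_x\|A_{S^*(T)}x-A_j\|_p^p \le C_{p,k}\|\Delta_T\|_p^p$. Crucially, one conditions on the event $i\notin S^*(T)$ --- which has probability $(k+1)/(2k+1)>1/2$, and on which $S^*(T)\subseteq R$ so that $\min_x\|A_R x - A_i\|_p^p \le \min_x\|A_{S^*(T)}x-A_i\|_p^p$ --- and combines this with $E_T[\|\Delta_T\|_p^p]=\tfrac{2k+1}{m}\|\Delta\|_p^p$ to bound the conditional expectation per column, after which two Markov inequalities give the $2/9$ probability. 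This conditioning is exactly what excises the ``$i$ is in the essential core of $T$'' event that makes your unconditional expectation unbounded relative to $\|\Delta\|_p^p$; without isolating that event, no amount of re-running the determinantal or Riesz--Thorin machinery at the level of $2k$-subsets can salvage an inequality whose left side can be positive when its right side is zero.
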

\begin{proof}
Same as the proof of Lemma 6 in \citep{chierichetti2017algorithms} except that we use $c_{p,k}^p$ instead of $(k+1)$ in the approximation bounds. 
\end{proof}
We are now ready to introduce Algorithm~\ref{alg:randomcolumns}.  As mentioned in \citep{chierichetti2017algorithms}, we
can without loss of generality assume that
the algorithm knows a number $N$ for which
$|\Delta|_p \leq N \leq 2|\Delta|_p$.
Indeed, such a value can be obtained by first computing
$|\Delta|_2$ using the SVD. Note that although one does not know
$\Delta$, one does know $|\Delta|_2$ since this is the Euclidean
norm of all but the top $k$ singular values of $A$, which one can
compute from the SVD of $A$. Then, note that for $p < 2$, $|\Delta|_2
\leq |\Delta|_p \leq n^{2-p}|\Delta|_2$, while for $p \geq 2$,
$|\Delta|_p \leq |\Delta|_2 \leq n^{1-2/p}|\Delta|_p$. Hence, there
are only $O(\log n)$ values of $N$ to try, given $|\Delta|_2$, one of
which will satisfy $|\Delta|_p \leq N \leq 2|\Delta|_p$. One can take
the best solution found by Algorithm~\ref{alg:randomcolumns} for each
of the $O(\log n)$ guesses to $N$.
\begin{theorem}
  With probability at least $9/10$, Algorithm~\ref{alg:randomcolumns} runs in
  time $\poly(nm)$ and returns $O(k \log m)$ columns that can be used
  as a factor of the whole matrix inducing $\ell_p$ error $O(c_{p,k}
  |\Delta|_p)$.
\end{theorem}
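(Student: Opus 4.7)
The plan is to analyze Algorithm~\ref{alg:randomcolumns} one recursive call at a time: use Lemma~\ref{lem:cover} to control a single call, chain it into a geometric-shrinkage bound for the recursion depth, and aggregate the per-column coverage guarantees across all levels to produce the claimed global error bound. Throughout I would track the recursion as acting on a nested sequence of sub-matrices of $A$ obtained by peeling off the ``covered'' columns at each level.

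First I would bound the running time and the output size. Lemma~\ref{lem:cover} says that a single uniformly random subset $R$ of $2k$ columns satisfies the until-condition with probability at least $2/9$, so by standard concentration the repeat-until loop terminates within $O(\log m)$ draws with probability $1-O(1/\log m)$. Verifying the covering condition itself requires $m$ separate $\ell_p$-regressions, each polynomial-time, and every successful call removes at least a $1/10$ fraction of the uncovered columns from further consideration. Hence the recursion depth is $d = O(\log m)$, the total number of output columns is $2k(d+1) = O(k\log m)$, and the total running time is $\poly(nm)$. Union-bounding the per-level failure probabilities, together with the $O(\log n)$ guesses for the quantity $N\approx |\Delta|_p$ mentioned immediately before the theorem, yields an overall success probability of at least $9/10$.

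The main obstacle is the error analysis, which I would carry out column by column. Let $S$ denote the full column set output by the recursion. Every original column $A_i$ is either itself included in $S$ (and thus reconstructible with zero error) or $\lambda_p$-approximately covered by the $R$ chosen at some recursion level $\ell$. In the latter case the definition of approximate coverage supplies a coefficient vector $x^{(i)}$ with $\|A_R\, x^{(i)} - A_i\|_p^p \le 100\, c_{p,k}^p\, |\Delta^{(\ell)}|_p^p / n$, where $\Delta^{(\ell)}$ is the optimal rank-$k$ residual of the sub-matrix passed to level $\ell$; padding $x^{(i)}$ with zeros on the remaining coordinates of $S$ gives a valid regression of $A_i$ against $A_S$ with the same error. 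The key structural ingredient I would invoke is the monotonicity $|\Delta^{(\ell)}|_p \le |\Delta|_p$, since restricting an optimal rank-$k$ factorization of $A$ to any column subset is still a valid rank-$k$ factorization of that sub-matrix; this prevents the residual from growing as the recursion deepens. Summing the per-column bounds over all $m$ columns and absorbing the $O(\log m)$ recursion-depth factor into the covering slack $\lambda_p$ yields total $\ell_p^p$-regression error $O(c_{p,k}^p\, |\Delta|_p^p)$; taking $p$-th roots gives the promised $\ell_p$ factorization error $O(c_{p,k}\, |\Delta|_p)$.
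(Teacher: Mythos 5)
Your proposal follows the same blueprint the paper intends: the paper's ``proof'' is simply a pointer to Theorem~7 of Chierichetti et al.\ with $(k+1)$ replaced by $c_{p,k}^p$, and you have reconstructed that argument (recursion depth $O(\log m)$ from Lemma~\ref{lem:cover}, hence $O(k\log m)$ columns and $\poly(nm)$ time; per-column coverage guarantees aggregated across levels). The one genuinely important observation you supply, and correctly, is the monotonicity $\|\Delta^{(\ell)}\|_p \le \|\Delta\|_p$, which is what lets Lemma~\ref{lem:cover} be reapplied at every recursion level even though the algorithm uses a single fixed threshold $N\approx\|\Delta\|_p$.

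Two loose ends worth tightening. First, the monotonicity is not used in the error bound itself but in the feasibility of the coverage check: Lemma~\ref{lem:cover} promises coverage with threshold proportional to $\|\Delta^{(\ell)}\|_p^p$, and monotonicity upgrades that to coverage at the fixed threshold $\propto N^p$; once a column is declared covered, the definition bounds its regression error directly by $O(c_{p,k}^p\|\Delta\|_p^p)$ divided by the normalizing factor, with no reference to $\Delta^{(\ell)}$. Second, your phrase ``absorbing the $O(\log m)$ recursion-depth factor into the covering slack $\lambda_p$'' suggests there is a depth factor in the error that needs to be suppressed, but there isn't: every column is covered at exactly one level and then removed, so summing the per-column bounds over all $m$ columns already gives $O(c_{p,k}^p\|\Delta\|_p^p)$ with no extra $\log m$. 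The recursion depth shows up only in the number of returned columns and the running time, not in the approximation error, and saying otherwise (even if you then claim to cancel it) would be a misleading step to carry into a written proof.
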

\begin{proof}
  Same as the proof of Theorem 7 in \citep{chierichetti2017algorithms} except that we use $c_{p,k}^p$ instead of $(k+1)$ in the approximation bounds. 
\end{proof}
\section{Analysis for A \texorpdfstring{$((k\log n)^k \poly(mn))$}{(k log(n))k poly(mn)}-Time Algorithm}
\label{sec:alg3}

In this section we show how to get a rank-$k$,
$O(c_{p,k}^3  k \log m)$-approximation efficiently starting from a rank-$O(k \log m)$ 
approximation.  This algorithm runs in polynomial time as long as $k =
O\left(\frac{\log n}{\log \log n}\right)$. 

Let $U$ be the columns of
$A$ selected by Algorithm~\ref{alg:randomcolumns}.

\subsection{An Isoperimetric Transformation}

The first step of the proof is to show that we can modify the selected
columns of $A$ to span the same space but to have small distortion.
For this, we need the following notion of isoperimetry.
\begin{definition}[Almost isoperimetry]
  A matrix $B \in \R^{n \times m}$ is
  \emph{almost-$\ell_p$-isoperimetric} if for all $x$, we have
 \[\frac{\|x\|_p}{2m} \leq \|Bx\|_p \leq \|x\|_p.  \]
\end{definition}
The following lemma from \citep{chierichetti2017algorithms} show that given a full rank $A \in \R^{n\times m}$, it is
possible to construct in polynomial time a matrix $B \in
\R^{n\times m}$ such that $A$ and $B$ span the same space and $B$
is almost-$\ell_p$-isoperimetric.

\begin{lemma}[Lemma 10 in \cite{chierichetti2017algorithms}]\label{lem:iso}
  Given a full (column) rank $A \in \R^{n\times m}$, there is an algorithm
  that transforms $A$ into a matrix $B$ such that $\myspan A = \myspan B$
  and $B$ is almost-$\ell_p$-isoperimetric. Furthermore the running
  time of the algorithm is
  $poly(nm)$. 
\end{lemma}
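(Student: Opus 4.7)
The plan is to view the claim as a change-of-basis problem: since $A$ has full column rank, any $B = AR$ with invertible $R \in \R^{m \times m}$ automatically satisfies $\myspan B = \myspan A$, and both norm inequalities become a statement about the centrally symmetric convex body $K_R := \{x \in \R^m : \|ARx\|_p \leq 1\}$. Concretely, I want an $R$ such that the unit $\ell_p$-ball of $\R^m$ sits inside $K_R$, which in turn sits inside the $\ell_p$-ball of radius $2m$. So the task reduces to finding an invertible $R$ that ``rounds'' the body $K_A := \{x : \|Ax\|_p \leq 1\}$ so it becomes approximately the unit $\ell_p$-ball in the coordinate system induced by $R$.

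First, for pure existence I would invoke the classical Auerbach basis for the $m$-dimensional normed space $(V, \|\cdot\|_p)$ with $V = \myspan A$: a basis $u_1, \ldots, u_m \in V$ with $\|u_i\|_p = 1$ together with biorthogonal functionals $u_i^*$ of dual norm $1$. Setting $B_0 = [u_1, \ldots, u_m]$, the triangle inequality gives $\|B_0 x\|_p \leq \|x\|_1$, while evaluating $u_i^*(B_0 x) = x_i$ gives $\|x\|_\infty \leq \|B_0 x\|_p$. Rescaling $B = B_0 / m^{1-1/p}$ and using the standard inequalities $\|x\|_1 \leq m^{1-1/p}\|x\|_p$ and $\|x\|_\infty \geq \|x\|_p / m^{1/p}$ yields $\|x\|_p / m \leq \|Bx\|_p \leq \|x\|_p$, already stronger than required; the slack of $2$ in the statement is the budget for an approximate algorithmic construction.

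For the algorithmic side, I would formulate the problem as maximizing $\log|\det R|$ subject to the convex constraints $\|(AR)_i\|_p \leq 1$ for each column $i$. An exact maximizer is, up to rescaling, an Auerbach-type basis, and a maximizer within a constant factor on the determinant still satisfies the $2m$ bound. Equivalently and more geometrically, I would compute a maximum-volume inscribed ellipsoid $E = R(B_2)\subseteq K_A$, where $B_2$ is the Euclidean unit ball, and set $B = AR$: John's theorem for centrally symmetric bodies gives $K_A \subseteq \sqrt{m}\,E$, which translates into the $\ell_2$-sandwich $\|z\|_2/\sqrt{m} \leq \|Bz\|_p \leq \|z\|_2$. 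Converting through the standard $\ell_2$--$\ell_p$ norm equivalences on $\R^m$ (and absorbing a constant rescaling of $B$) then yields the required $\ell_p$-sandwich with a factor at most $m$.

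The hard part is ensuring this runs in $\poly(nm)$. For $p = 2$ the SVD gives the construction exactly; for $p \in \{1, \infty\}$ the body $K_A$ is a polytope and linear programming suffices. For general $p$ the body $K_A$ is smooth but not a polytope, so I would use that a single evaluation of $\|Ax\|_p$ furnishes a polynomial-time separation oracle for $K_A$, feed this into the ellipsoid method (or any standard interior-point scheme applicable to $\ell_p$-constrained convex programs), and obtain a constant-factor approximation to the John ellipsoid in $\poly(nm)$ time; the constant-factor loss is absorbed into the final bound of $2m$. A cleaner modern alternative worth noting is to compute $\ell_p$ Lewis weights $w$ and set $R = (A^\top W^{1-2/p} A)^{-1/2}$ with $W = \mathrm{diag}(w)$, which directly produces a well-conditioned basis of the same quality in polynomial time without any appeal to the ellipsoid method.
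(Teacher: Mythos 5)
This lemma is not proved in the paper --- it is cited verbatim from Chierichetti et al.\ [2017], who in turn rely on the $\ell_p$-well-conditioned basis construction of Dasgupta--Drineas--Harb--Kumar--Mahoney [2009]. That construction is exactly the L\"owner--John change of variables you describe (take an orthonormal basis $Q$ of the column space, compute an inscribed ellipsoid $E$ with $E \subseteq \{z : \|Qz\|_p \le 1\} \subseteq \sqrt{m}\,E$, and conjugate), so your approach matches the one the cited lemma rests on. Your Auerbach-basis argument is a clean, correct existence proof and delivers a factor of $m$, strictly stronger than the required $2m$; the inequalities $\|B_0 x\|_p \le \|x\|_1$ and $\|x\|_\infty \le \|B_0 x\|_p$ and the rescaling are all right.

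Two of your algorithmic fallbacks are shaky, though, and I'd press you to commit to the one that actually works. First, the claim that a constant-factor maximizer of $\log|\det R|$ ``still satisfies the $2m$ bound'' is not justified: the lower bound $\|x\|_\infty \le \|B_0 x\|_p$ comes from the biorthogonal functionals having unit dual norm, which is a stationarity property of the \emph{exact} Auerbach maximizer; a determinant within a constant factor of optimal does not by itself control the dual norms of the biorthogonal functionals by a constant. Second, running the Gr\"otschel--Lov\'asz--Schrijver machinery with only a separation oracle for $K_A$ yields a $\Theta(m^{3/2})$-factor ellipsoidal rounding of a centrally symmetric body, not a constant-factor approximation to the John ellipsoid; pushed through the $\ell_2$--$\ell_p$ conversion this gives roughly $m^2$, which overshoots $2m$. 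The $\ell_p$ Lewis-weights construction you mention at the end is the route that actually delivers the needed $O(m)$-rounding in polynomial time, so that should be the main argument rather than a parenthetical aside. (Minor note: in Chierichetti et al.\ the almost-isoperimetry condition is stated on the $p$-th powers, $\|x\|_p^p/(2m) \le \|Bx\|_p^p \le \|x\|_p^p$; the present paper appears to have dropped the exponents. Your Auerbach bound proves the stronger un-exponentiated form in any case.)
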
 

\subsection{Reducing the Rank to \texorpdfstring{$k$}{k}}


Here we give an analysis of Algorithm~\ref{alg:red} from \citep{chierichetti2017algorithms}. It reduces the rank of our low-rank approximation
from $O(k \log m)$ to $k$. Let $\delta =
\|\Delta\|_p = {\OPT}$.

\begin{theorem} \label{thm:polykApprox}
  Let $A \in \R^{n\times m}$, $U \in \R^{n\times O(k\log m)}$,
  $V \in \R^{O(k\log m)\times m}$ be such that $\|A - UV\|_p =
  O(k\delta)$. Then, Algorithm~\ref{alg:red} runs in time
  $O(k\log m)^k(mn)^{O(1)}$ and outputs $W \in \R^{n\times k},
  Z\in \R^{k\times m}$ such that $\|A - WZ\|_p = O((c_{p,k}^3 k\log m) \delta)$. 
\end{theorem}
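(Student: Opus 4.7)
The plan is to decompose the error via triangle inequality after using the algebraic identity $WZ = W^0 Y^T X^T = W^0(XY)^T$, and then control the two pieces by combining the almost-isoperimetric properties of $W^0$ with two applications of the column subset selection guarantee (Theorem \ref{theorem: main result}).

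First, since Step 3 of Algorithm \ref{alg:red} produces $W^0$ with the same column span as $U$, we may write $U = W^0 T$ for some $T$, so $Z^0 = TV$ already achieves $W^0 Z^0 = UV$ with zero error in the regression of Step 4. Consequently $\|A - W^0 Z^0\|_p = \|A - UV\|_p = O(c_{p,k}\delta)$, using the sharpened bound on Algorithm \ref{alg:randomcolumns} from Theorem \ref{theorem: algorithm 2}. Combining this with the upper isoperimetric estimate $\|W^0 x\|_p \leq \|x\|_p$ applied column-by-column gives
\[\|A - WZ\|_p \leq \|A - W^0 Z^0\|_p + \|W^0(Z^0 - (XY)^T)\|_p \leq O(c_{p,k}\delta) + \|Z^0 - (XY)^T\|_p.\]
Theorem \ref{theorem: main result} applied to the subset selection in Step 5 now reduces everything to a bound on $\min_{\rank(M) \leq k} \|Z^0 - M\|_p$, since $\|Z^0 - (XY)^T\|_p = \|(Z^0)^T - XY\|_p \leq c_{p,k}\min_{\rank(M)\leq k}\|Z^0 - M\|_p$.

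The key step is bounding $\min_{\rank(M) \leq k}\|Z^0 - M\|_p$. The \emph{lower} isoperimetric bound of Lemma \ref{lem:iso}, applied to $W^0 \in \R^{n \times r}$ with $r = O(k\log m)$, gives $\|x\|_p \leq 2r\|W^0 x\|_p$ for all $x \in \R^r$; applying this column-by-column yields $\|Z^0 - M\|_p \leq O(k\log m)\,\|UV - W^0 M\|_p$ for every matrix $M$. It therefore suffices to exhibit a rank-$k$ matrix $M_0$ such that $W^0 M_0$ is a good $\ell_p$-approximation of $UV$. The crucial observation is that every column of $UV$ already lies in the column span of $W^0$, so the rank-$k$ approximation produced by applying Theorem \ref{theorem: main result} a \emph{second} time---now to $UV$ itself---is automatically of the form $W^0 M_0$ for some rank-$k$ matrix $M_0$, and satisfies
\[\|UV - W^0 M_0\|_p \leq c_{p,k}\min_{\rank(N)\leq k}\|UV - N\|_p \leq c_{p,k}\|UV - A^*\|_p,\]
where $A^*$ is the best rank-$k$ approximation of $A$. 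A last triangle inequality gives $\|UV - A^*\|_p \leq \|UV - A\|_p + \|A - A^*\|_p = O(c_{p,k}\delta)$, so $\min_{\rank(M)\leq k}\|Z^0 - M\|_p = O(c_{p,k}^2\, k\log m \cdot \delta)$.

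Chaining the estimates produces $\|A - WZ\|_p \leq O(c_{p,k}\delta) + c_{p,k} \cdot O(c_{p,k}^2 k\log m \cdot \delta) = O(c_{p,k}^3 k\log m \cdot \delta)$, as claimed. For the running time, Steps 3 and 4 are polynomial (Lemma \ref{lem:iso} is polynomial and so is $\ell_p$ regression), while Step 5 runs Algorithm \ref{algorithm: subset selection} on the $m \times O(k\log m)$ matrix $(Z^0)^T$ at cost $O((k\log m)^k\,\poly(mn))$, which is $\poly(mn)$ whenever $k = O(\log n / \log\log n)$. The main obstacle I anticipate is the middle step: one needs both that the correct isoperimetric constant of $W^0$ is $2r = O(k\log m)$ rather than $2m$ (because $W^0$ has only $r$ columns), and that the columns of $UV$ sit inside the span of $W^0$, so that the second application of column subset selection automatically produces a rank-$k$ comparator $M_0$ in the right span without any additional overhead beyond the $c_{p,k}$ factor.
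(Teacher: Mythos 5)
Your proof is correct and arrives at the right bound, but it handles the middle step---exhibiting a good rank-$k$ approximant of $Z^0$---differently from the paper. The paper cites Lemmas 12 and 13 of \citep{chierichetti2017algorithms} (with $k$ replaced by $c_{p,k}$) as black boxes: Lemma 12 produces a rank-$k$ approximant $U^*V^*$ of $W^0Z^0$ with error $O(c_{p,k}\delta)$, and Lemma 13 then shows how to move that approximant into $\myspan(W^0)$, i.e.\ write it as $W^0 FD$ at the cost of one more $c_{p,k}$ factor, giving $\|W^0(Z^0-FD)\|_p = O(c_{p,k}^2\delta)$. You replace this two-lemma projection argument with a single, self-contained observation: apply Theorem~\ref{theorem: main result} (column subset selection) a second time, to $UV$ itself, and note that since every column of $UV = W^0 Z^0$ already lies in $\myspan(W^0)$, the CSS approximant $(UV)_S Y'$ is automatically of the form $W^0 M_0$ with $\rank(M_0)\le k$---no projection step needed. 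The optimal rank-$k$ error of $UV$ is bounded by $\|UV-A^*\|_p \le \|UV-A\|_p + \|A-A^*\|_p = O(c_{p,k}\delta)$, so $\|UV - W^0 M_0\|_p \le c_{p,k}\cdot O(c_{p,k}\delta) = O(c_{p,k}^2\delta)$, matching the paper's Lemma~13. From there your chain is the same as the paper's: the lower isoperimetric estimate for $W^0 \in \R^{n\times O(k\log m)}$ (applied column-by-column) converts this into $\min_{\rank M\le k}\|Z^0 - M\|_p = O(c_{p,k}^2 k\log m\,\delta)$; the actual Step~5 CSS on $(Z^0)^T$ costs a further $c_{p,k}$; the upper isoperimetric bound transfers the error through $W^0$; and Minkowski combines with $\|A-UV\|_p = O(c_{p,k}\delta)$. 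One small thing to note: the paper's written statement ``$\|W^0 y\|_p \le \|y\|_p/(k\log m)$'' has the inequality pointing the wrong way (it should be $\|y\|_p \le O(k\log m)\|W^0 y\|_p$, from the lower isoperimetric bound); you implicitly used the correct direction. Your variant is arguably cleaner because it avoids outsourcing the ``push the approximant into $\myspan(W^0)$'' step to an external reference, at the price of invoking CSS twice.
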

\begin{proof}
  We start by bounding the running time.  Step $3$ is computationally
  the most expensive since it requires to execute a brute-force search
  on the $O(k\log m)$ columns of $(Z^0)^T$.  So the running time
  is $O((k\log m)^k(mn)^{O(1)})$ .

  Now we have to show that the algorithm returns a good approximation.
  The main idea behind the proof is that $UV$ is a low-rank
  approximable matrix. So after applying Lemma~\ref{lem:iso} to
  $U$ to obtain a low-rank approximation for $UV$ we can simply focus
  on $Z^0 \in \R^{O(k\log m) \times n}$. Next, by applying
  Algorithm~\ref{algorithm: subset selection} to $Z^0$, we obtain a low-rank
  approximation in time $O(k\log m)^k(mn)^{O(1)}$. Finally we can use
  this solution to construct the solution to our initial problem.

  We know by assumption that $\|A-UV\|_p = O(c_{p,k}\delta)$.  Therefore, it
  suffices by the triangle inequality to show $\|UV-WZ\|_p = O(c_{p,k}^3 k\log m \delta)$.  First note that $UV=W^0Z^0$ since Lemma~\ref{lem:iso}
  guarantees that $\myspan U = \myspan {W^0}$. Hence we can focus on proving
  $\|W^0Z^0 - WZ\|_p\leq O((c_{p,k}^3 k\log m) \delta)$.
  
  We first prove two useful intermediate steps.
\begin{lemma}
\label{lem:one}
 There exist matrices $U^*\in
    \R^{n\times k}$, $V^*\in \R^{k\times m}$ such that
    $\|W^0Z^0-U^*V^*\|_p = O(c_{p,k}\delta)$. 
\end{lemma}
\begin{proof}
Same as the proof of Lemma 12 in \citep{chierichetti2017algorithms} except that we use $O(c_{p,k}\delta)$ instead of $O(k\delta)$. 
\end{proof}
\begin{lemma} 
There exist matrices $F \in \R^{O(k \log m) \times
    k}$, $D \in \R^{k \times n}$ such that $\|W^0(Z^0- FD)\|_p =
  O(c_{p,k}^{2}\delta )$. 
\end{lemma} 
\begin{proof} 
 Same as the proof of Lemma 13 in \citep{chierichetti2017algorithms} except that we use $O(c_{p,k}\delta)$ and $O(c_{p,k}^2\delta) $instead of $O(k\delta)$ and $O(k^2\delta)$. 
\end{proof}
Now from the guarantees of Lemma~\ref{lem:iso} we know that for any
vector $y$, $\|W^0 y\|_p\leq \frac{\|y\|_p}{k\log m}$. So we have $\|Z^0-
FD\|_p\leq O((c_{p,k}^{2}k\log m) \delta)$, Thus $\|(Z^0)^T- D^TF^T\|_p\leq
 O((c_{p,k}^{2}k\log m) \delta) $, so $(Z^0)^T$ has a low-rank approximation with
error at most $ O((c_{p,k}^{2}k\log m) \delta)$. So we can apply
Theorem~\ref{theorem: main result} again and we know that there are $k$ columns of
$(Z^0)^T$ such that the low-rank approximation obtained starting from
those columns has error at most $ O((c_{p,k}^{3}k\log m) \delta)$.  We obtain
such a low-rank approximation from Algorithm~\ref{algorithm: subset selection} with input
$(Z^0)^T \in \R^{n\times O(k\log m)}$ and $k$. More precisely, we
obtain an $X\in \R^{n\times k}$ and $Y \in \R^{k\times O(k \log
  m)}$ such that $\|(Z^0)^T- XY\|_p\leq O((c_{p,k}^{3}k\log m) \delta)$. Thus
$\|Z^0- Y^TX^T\|_p\leq O((c_{p,k}^{3}k\log m) \delta)$.

Now using again the guarantees of Lemma~\ref{lem:iso} for $W^0$, we get
$\|W^0(Z^0- Y^TX^T)\|_p\leq O((c_{p,k}^{3}k\log m) \delta)$. So
$\|W^0(Z^0- Y^TX^T)\|_p 
= \|W^0Z^0- WZ)\|_p
= \|UV- WZ\|_p\leq O((c_{p,k}^{3}k\log m) \delta)$. By
combining it with $\|A-UV\|_p = O(c_{p,k}\delta)$ and using the
Minkowski inequality, the proof is complete. \end{proof}

\end{document}